

\documentclass[11pt]{article}
\usepackage{fullpage}
\usepackage{subfig}
\usepackage[subfigure]{tocloft}
\usepackage[usenames,dvipsnames]{xcolor}


\usepackage[linktocpage=true,
	pagebackref=true,colorlinks,
	linkcolor=BrickRed,citecolor=blue,
	bookmarks,bookmarksopen,bookmarksnumbered]
	{hyperref}

	\usepackage{latexsym,graphicx,epsfig,color}
\usepackage{amsfonts,amssymb,amsmath,amsthm,amstext}
\usepackage{lmodern}
\usepackage{enumitem}
\setitemize{itemsep=2pt,topsep=0pt,parsep=0pt}
\usepackage{url,setspace}
\usepackage{multirow}
\usepackage{rotating}
\usepackage{makeidx}
\usepackage{tikz}
\usepackage{accents}
\usepackage{xspace}
\usepackage{algorithm,algpseudocode}
\usepackage{bm}
\usepackage{changepage} 	
\usepackage{thmtools,thm-restate}

\allowdisplaybreaks

\usetikzlibrary{decorations.markings}
\usetikzlibrary{arrows}
\tikzstyle{block}=[draw opacity=0.7,line width=1.4cm]
\tikzstyle{graphnode}=[circle, draw, fill=black!20, inner sep=0pt, minimum width=6pt]
\tikzstyle{point}=[circle, draw, fill=black!30, inner sep=0pt, minimum width=1pt]
\tikzstyle{input}=[rectangle, draw, fill=black!75,inner sep=3pt, inner ysep=3pt, minimum width=4pt]
\tikzstyle{unmatched}=[graphnode,fill=black!0]
\tikzstyle{shaded}=[graphnode,fill=black!20]
\tikzstyle{matched}=[graphnode,fill=black!100]  	
\tikzstyle{matching} = [ultra thick]
\tikzset{
    >=stealth',
    pil/.style={
           ->,
           thick,
           shorten <=2pt,
           shorten >=2pt,}
}
\tikzset{->-/.style={decoration={
  markings,
  mark=at position .5 with {\arrow{>}}},postaction={decorate}}}

\makeindex

 \setlength{\parindent}{0pt}
 \addtolength{\partopsep}{-2mm}
 \setlength{\parskip}{5pt plus 1pt}

\newtheorem{theorem}{Theorem}[section]
\newtheorem{claim}[theorem]{Claim}
\newtheorem{proposition}[theorem]{Proposition}
\newtheorem{lemma}[theorem]{Lemma}

\newtheorem{fact}[theorem]{Fact}
\theoremstyle{definition}

\usepackage{thmtools,thm-restate} 









\def\eps {\epsilon}

\def \reals {\mathbb{R}}








\newcounter{note}[section]
\newcommand{\snote}[1]{\refstepcounter{note}$\ll${\bf Sahil~\thenote:}
  {\sf \color{red}  #1}$\gg$\marginpar{\tiny\bf SS~\thenote}}

\newcommand{\BwK}{\ensuremath{\textsc{BwK}}\xspace}
\newcommand{\BwKp}{\ensuremath{\textsc{BwK}_p}\xspace}
\newcommand{\OLVC}{\ensuremath{\textsc{OLVC}}\xspace}
\newcommand{\OLVCp}{\ensuremath{\textsc{OLVC}_p}\xspace}

\newcommand{\OGLBp}{\ensuremath{\textsc{OGLB}_p}\xspace}
\newcommand{\Regret}{\textsc{Regret}}
\newcommand{\calA}{\ensuremath{\mathcal{A}}}
\newcommand{\load}{\mathbf{\Lambda}}
\newcommand{\OPT}{\textsc{OPT}\xspace}
\newcommand{\OPTOLVC}{\ensuremath{\textsc{OPT}_{\textsc{OLVC}}}\xspace}
\newcommand{\OPTBwK}{\ensuremath{\textsc{OPT}_{\textsc{BwK}}}\xspace}

\newcommand{\ALGOLVC}{\ensuremath{\textsc{ALG}_{\textsc{OLVC}}}\xspace}
\newcommand{\ALGBwK}{\ensuremath{\textsc{ALG}_{\textsc{BwK}}}\xspace}
\newcommand{\E}{\mathbb{E}}
\newcommand{\one}{\mathbf{1}\xspace}
\newcommand{\zero}{\mathbf{0}\xspace}
\def\eps {\epsilon}
\newcommand{\IGNORE}[1]{}
\def \reals {\mathbb{R}}

\usepackage{mdframed}

\newtheorem{mdresult}{Main Result}
\newenvironment{mainresult}{\begin{mdframed}[backgroundcolor=lightgray!40,topline=false,rightline=false,leftline=false,bottomline=false,innertopmargin=5pt]\begin{mdresult}}{\end{mdresult}\end{mdframed}}


\title{Online Learning with Vector Costs\\
and  Bandits with Knapsacks}



\author{%
Thomas Kesselheim\thanks{(thomas.kesselheim@uni-bonn.de)
 Institute of Computer Science,	    University of Bonn.}
 \and
 {Sahil Singla}\thanks{(singla@cs.princeton.edu)
    Department of Computer Science at
        Princeton University and
        School of Mathematics at 
        Institute for Advanced Study.        Supported in part by the Schmidt Foundation.}
}

\begin{document}

\maketitle

\begin{abstract}{We introduce online learning with vector costs (\OLVCp)  where in each time step $t \in \{1,\ldots, T\}$, we need to play an action $i \in \{1,\ldots,n\}$ that incurs an unknown vector cost in $[0,1]^{d}$. The goal of the online algorithm is to minimize the $\ell_p$ norm of the sum of its cost vectors. This captures the classical online learning setting for $d=1$, and is interesting for general $d$ because of applications like online scheduling where we want to balance the load between different machines (dimensions). 

We study \OLVCp in both stochastic and adversarial arrival settings, and give a general procedure to reduce the  problem from $d$ dimensions to  a single dimension. This allows us to use classical online learning algorithms in both full and bandit feedback models to obtain (near) optimal results. In particular, we obtain a single algorithm   (up to the choice of learning rate) that gives sublinear regret for stochastic arrivals and a tight $O(\min\{p, \log d\})$ competitive ratio for adversarial arrivals.

The \OLVCp problem also occurs as a natural subproblem when trying to solve the popular Bandits with Knapsacks (\BwK) problem. This connection allows us to use our \OLVCp techniques to obtain (near) optimal results for \BwK in both stochastic and adversarial settings. In particular, we obtain a tight $O(\log d \cdot \log T)$ competitive ratio algorithm for adversarial \BwK, which improves over the $O(d \cdot \log T)$ competitive ratio algorithm  of Immorlica et al. [FOCS'19].
}\end{abstract}

\setcounter{tocdepth}{1}

{\small
\begin{spacing}{0.01}
   \tableofcontents
\end{spacing}
}

\clearpage


\section{Introduction}\label{sec:intro}

The field of online learning has seen tremendous progress in the last three decades since the seminal work of~\cite{LittlestoneW-FOCS89}. This is primarily because it is the simplest model to study exploration vs.\ exploitation trade-off while still being powerful enough to model several applications like clinical trials, ad-allocation, and portfolio optimization.
The classic setup of online learning is that we are given a set $\calA$ of $n$ actions and in each time step $t\in [T]$ we take an action  $x^{(t)} \in \Delta_n$\footnote{We use the notation $[T]:= \{1,2,\ldots,T\}$ and  write $\Delta_n$ to mean the simplex $\{\mathbf{x} \in [0,1]^n  \mid \sum_{i=1}^n x_i = 1 \}$.}. After taking the action we  incur a \emph{scalar} cost  $  c^{(t)} x^{(t)} $, where $c^{(t)} \in [0,1]^{1 \times n}$ is a row vector of costs, and receive a \emph{feedback} about $c^{(t)}$.
The goal of the algorithm is to minimize its total cost compared to that of the best \emph{fixed action}  $x^\ast \in \Delta_n$ chosen in hindsight. Several variants of this basic  problem have been studied in the literature, depending on the amount of feedback that the algorithm receives (full vs. partial) and who draws the costs $c^{(t)}$ (adversary vs. stochastic)~(see books~\cite{CL-Book06,BubeckC-Book12,Hazan-Book16,Slivkins-Book19,LS-Book20}).

In this work we study online learning problems where  the actions incur \emph{vector costs} in $[0,1]^d$. The algorithm wants to minimize the $\ell_p$ norm, given some integer $p \geq 1$, of the total cost vector. Formally, in the \emph{online learning with vector costs} (\OLVCp) problem
there is a set $\calA$ of $n$ actions and in each  time step $t$
we take an action  $x^{(t)} \in \Delta_n$. After taking the action we  incur a \emph{vector} cost $ C^{(t)} x^{(t)} $, where $C^{(t)} \in [0,1]^{d \times n}$ is a cost matrix, and receive a feedback about $C^{(t)}$. The goal of the algorithm is to minimize  the $\ell_p$ norms of its total cost vector compared to that of a fixed benchmark distribution $x^\ast \in \Delta_n$ over the actions\footnote{The optimal fixed $x^\ast$ is a distribution over actions, rather than being a fixed deterministic action as in the $d=1$ case. E.g., suppose $d=2$ and $p=\infty$ and there are two actions with the cost matrix $C^{(t)}$ always being a 2-dimensional identity matrix. Here the optimal distribution is $(0.5,0.5)$, which is a constant factor better than any fixed action.}.
This problem clearly captures classic online learning for $d=1$. Again we want to  study the problem depending on both the amount of feedback and on who draws the cost matrices.

The motivation to study online learning with vector costs is twofold. We first argue below that it is a natural problem in itself because it models the classical scheduling problem of online load balancing. Next we argue that the problem is also interesting because it is a natural step towards designing (optimal) algorithms for the Bandits with Knapsacks (\BwK) problem, both in the stochastic and adversarial settings. The \BwK problem has been  applications like dynamic item pricing, repeated auctions, and dynamic procurement (e.g., see \cite{BKS-JACM18,AD-EC14,SankararamanS-AISTATS18,ISSS-FOCS19}).


\subsection{Online Load Balancing}
In the classical \emph{online generalized load balancing} (\OGLBp) problem, a sequence of $T$ jobs arrive one-by-one and the algorithm has to process them on $d$ machines to minimize the $\ell_p$ norm of the total load vector. Each job can be processed in $n$ ways, and playing $x^{(t)} \in \Delta_n$ incurs a vector load $C^{(t)}x^{(t)}$, where $C^{(t)} \in [0,1]^{d \times n}$ is a load matrix.
It generalizes the fundamental makespan minimization problem where we want to minimize the maximum load, i.e., the $\ell_{\infty}$ norm. The motivation to study $\ell_{p}$ norms for arbitrary $p$ comes from noticing that $p=1$ captures the total load, $p=\infty$  captures the maximum load, and $p$ between $1$ and $\infty$  interpolates between the two extremes. 
The problem and its special cases have been well studied both for adversarial loads (\cite{AwerbuchAGKKV-FOCS95,Caragiannis-SODA08,ImKKP-SICOMP19}) and stochastic loads (\cite{Molinaro-SODA17}).

The major difference between the {classical} \OGLBp and our \OLVCp problems is that in the former the algorithm  gets to see the load/cost matrix $C^{(t)}$ \emph{before} playing the action, whereas in the latter the algorithm receives a full/partial feedback only \emph{after} playing the action. Depending on the application in hand, both models make sense as it may or may not be possible to find the cost matrix before playing the action.
Of course, this makes a huge difference in what guarantees one can hope to achieve. Indeed, in \OGLBp the benchmark is an arbitrary sequence  of actions (policy) chosen in hindsight, while in \OLVCp the benchmark is a fixed distribution $x^\ast$ of actions. Given a benchmark, we can compare different algorithms using  the \emph{regret} or the \emph{competitive-ratio} framework (defined below), depending on whether the ratio of the costs of the algorithm and the benchmark tend towards $1$, or not, as $T\rightarrow \infty$.

In \OLVCp the benchmark is a fixed distribution  $x^\ast \in \Delta_{n}$  on actions. Its cost  is defined to be 
\begin{align} \label{eq:defnOPT} 
 \OPTOLVC :=  \Big\|\sum_{t = 1}^T C^{(t)} \cdot x^\ast \Big\|_p \quad \text{or} \quad \OPTOLVC :=  \E \Big[ \Big\|\sum_{t = 1}^T C^{(t)} \cdot x^\ast \Big\|_p \Big],
\end{align}
depending on  the adversarial or the stochastic setting, respectively. The total cost of the algorithm is 
$\ALGOLVC:= \E \big[ \big\|\sum_{t = 1}^T C^{(t)} \cdot x^{(t)} \big\|_p \big]$,
where the expectation is taken over any randomization of the algorithm and of the cost matrices. 
We say an algorithm  is $\alpha$-\emph{competitive} if $\ALGOLVC \leq \alpha \cdot \OPTOLVC + o(T)$. For $1$-competitive algorithms we define \emph{regret} to be $\ALGOLVC - \OPTOLVC$.

The following is our main result for \OLVCp, which is formally stated in Theorems~\ref{prop:stochLoad} and~\ref{prop:advLoad}.

\begin{mainresult} \label{thm:OLVCp}
\emph{For stochastic \OLVCp there exists an algorithm that guarantees $\ALGOLVC \leq \OPTOLVC + o(T)$. For adversarial \OLVCp there exists an algorithm that guarantees $\ALGOLVC \leq O(\min\{p, \log d\}) \cdot \OPTOLVC + o(T)$.}
\end{mainresult}
So, for $\ell_\infty$ norm, we get an $O(\log d)$-competitive algorithm for \OLVC with adversarial arrivals.
A  powerful feature of this result is that the same algorithm (up to the choice of the learning-rate parameter) achieves (near) optimal guarantees in both the stochastic and adversarial settings.

This work is not the first one to consider online learning problems with vector costs. Indeed, \cite{AD-EC14} study the \emph{Bandits with Convex Knapsacks} problem where one also incurs vector costs, and the regret is measured in terms of the distance from a given convex body. Since $\ell_p$ norm is convex, their UCB-style algorithms can be used to obtain sublinear regrets for our \emph{stochastic} \OLVCp.
Our challenges and techniques, however, are very different from \cite{AD-EC14} for adversarial arrivals. On the one hand, UCB-style techniques do not apply for adversarial arrivals. On the other hand, we show in \S\ref{sec:lowerBounds} that every online algorithm is $\Omega(\min\{p, \log d\})$-competitive for adversarial \OLVCp. The fact that our adversarial \OLVCp algorithm also gives  near-optimal guarantees for  stochastic \OLVCp is a bonus. 
Another relevant paper is~\cite{AFFT-ITCS14}, which also considers online learning for adversarial cost vectors but  defines a new performance measure based on ``excuse blocks'' to obtain sublinear regrets. A related but different model for online load balancing has also been studied in~\cite{EDKMM-COLT09,RST-COLT11,LHT=arXiv20}.



\subsection{Bandits with Knapsacks} \label{sec:introBwK}

The Bandits with Knapsacks (\BwK) problem was introduced in~\cite{BKS-JACM18} to handle  applications of multi-armed bandits with resource constraints. It is very similar to the above \OLVC problem, but is in a maximization setting. We define its generalization to arbitrary $\ell_p$ norms, denoted by \BwKp as follows (for $p=\infty$ this is the same as \BwK). We are given a budget $B \geq 0$ and a set $\calA$ of $n$ actions. In each  time step $t \in [T]$, we take an action  $x^{(t)} \in \Delta_n$.  After taking the action, we receive a \emph{scalar} reward $r^{(t)} x^{(t)}$ and incur a \emph{vector} cost $ C^{(t)} x^{(t)} $,  where $r^{(t)} \in [0,1]^{1 \times n}$ is a row vector of rewards and $C^{(t)} \in [0,1]^{d \times n}$ is a cost matrix. Moreover, we also receive a  partial/full feedback about $C^{(t)}$ and $r^{(t)}$.  We assume there is a \emph{null action} in $\calA$ that gives $0$ reward and $\zero_d$ vector cost\footnote{We use the notation $\zero_d$ (or $\one_d$) to mean a $d$-dimensional vector of all $0$s (or $1$s).}, which allows us to skip some time steps. Our goal is to maximize the total  reward received while  ensuring that the $\ell_p$ norm of the total cost vector is  less than $B$. After exhausting the budget, we are only allowed to choose the null action (thus we obtain no further reward).

The benchmark for \BwKp is any fixed distribution $x^\ast \in \Delta_n$ over actions. Let $\tau^\ast$ be the time step at which the benchmark exhausts its budget, i.e., $\tau^\ast \leq T$ is maximal such that $\| \sum_{t=1}^{\tau^\ast} C^{(t)} x^\ast \|_p \leq B$ or $\| \E[\sum_{t=1}^{\tau^\ast} C^{(t)} x^\ast] \|_p \leq B$ depending on adversarial/stochastic setting. Now the optimal reward  $\OPTBwK$  is defined to be $\sum_{t=1}^{\tau^\ast} r^{(t)} x^\ast$ or $\E[\sum_{t=1}^{\tau^\ast} r^{(t)} x^\ast]$, respectively, and the value of the algorithm $\ALGBwK:= \E \Big[ \sum_{t = 1}^T r^{(t)} x^{(t)} \Big]$, where the expectation is taken over any randomization of the algorithm and the cost matrices.
In such a maximization setting, we say an algorithm is $\alpha$-\emph{competitive} if $ \ALGOLVC \geq \frac{1}{\alpha} \cdot \OPTOLVC - o(T)$. For $1$-competitive algorithms, we define its \emph{regret} to be $\OPTBwK - \ALGBwK$. That is, also for \BwKp, the competitive ratio is always at least $1$ and bounds on the regret will be non-negative.

The \BwK problem has been generalized in various directions since the original work of ~\cite{BKS-JACM18} (see related work in \S\ref{sec:relatedWork}). All these works were, however, only in the stochastic setting.  In a very recent work, \cite{ISSS-FOCS19} initiated the study of \BwK in an adversarial setting where both the costs and the rewards are chosen by an adversary. Since sub-linear regrets are impossible here, they  design algorithms with a small competitive ratio $\alpha$. In particular, assuming the value of $\OPTBwK$ is known, they give an $O(d)$-competitive algorithm and show that every online algorithm is $\Omega(1)$-competitive. Without this assumption, they give an $O(d \cdot \log T)$-competitive algorithm and show that every online algorithm is $\Omega(\log T)$-competitive.

Although \cite{ISSS-FOCS19} give the optimal dependency on the time-horizon $T$ in the competitive ratio of adversarial \BwK, they leave open what is the optimal dependency on the number of dimensions $d$. Understanding the optimal dependency on $d$ is useful for applications with large $d$, e.g., in online ad-allocations every bidder has a budget constraint and $d$ corresponds to the number of bidders~\cite{Mehta-TCS12}.  Our second main result (formally proven in Theorems~\ref{prop:BwKAdversarial} and~\ref{prop:BwKStochastic}) resolves this open question of \cite{ISSS-FOCS19} and gives the optimal competitive ratio for adversarial $\BwK$, and its generalization to adversarial $\BwKp$. 

\begin{mainresult} \label{thm:BwKp}
\emph{For stochastic \BwKp there exists an algorithm that guarantees $\ALGBwK \geq \OPTBwK - o(T)$. For adversarial \BwKp there exists an algorithm that guarantees $\ALGBwK \geq \frac{1}{O(\min\{p, \log d\})} \cdot  \OPTBwK - o(T)$ assuming it is given the value $\OPTBwK$, and \linebreak $  \ALGBwK \geq \frac{1}{O(\min\{p, \log d\} \log T)} \cdot\OPTBwK - o(T)$ when $\OPTBwK$ is not known.}
\end{mainresult}


Note that an immediate corollary of the result is an $O(\log d)$-competitive algorithm for adversarial \BwK when \OPTBwK is known, and an $O(\log d \cdot \log T)$-competitive algorithm otherwise. This improves on the $O(d)$-competitive and the $O(d \cdot \log T)$-competitive algorithms  of \cite{ISSS-FOCS19} in the respective settings. 
Our proof crucially relies on the connections between \BwKp and \OLVCp, after we Lagrangify the budget constraint of \BwKp. In \S\ref{sec:lowerBounds} we also show that our competitive ratios for adversarial arrivals  in Main Result~\ref{thm:BwKp} are optimal.


\subsection{Further Related Work} \label{sec:relatedWork}
Since the field of online learning is vast, we only discuss the  most relevant work and refer the reader to the books mentioned in the introduction for other references. At a high-level, our proofs go by reducing the $d$-dimensional problem to a surrogate classical $1$-dimensional online learning problem against an \emph{adaptive} adversary. 
Now we  run classical algorithms like Hedge~\cite{FS-GEB99} in the full feedback setting and EXP-3.P~\cite{AuerCFS-SICOMP02} in the bandit feedback setting. 

Since the original work of \cite{BKS-JACM18} (conference version appeared in 2013),  the \BwK problem has been generalized in various directions like  concave-rewards with convex-costs~\cite{AD-EC14}, contextual bandits~\cite{AgrawalD-NIPS16,AgrawalDL-COLT16}, and combinatorial semi-bandits~\cite{SankararamanS-AISTATS18}. All these works are in the stochastic setting and use UCB-style algorithms to obtain sub-linear regrets.  \cite{ISSS-FOCS19} introduce the adversarial \BwK problem and
show that their techniques  also apply to some of these directions for  adversarial arrivals.  
Another relevant work is that of \cite{RangiFT-IJCAI19} who consider adversarial \BwK but only for $d=1$.

The competitive analysis framework has been popular in the online algorithms community since at least the work of~\cite{SleatorTarjan-85}. We refer the readers to the books~\cite{borodin2005online,BuchbinderNaor-Book09}. Unlike online learning,  these results are in a setting where the algorithm gets full-information about the next arrival before making its decision. In recent years, competitive analysis has also become popular for online learning. In particular, \cite{KakadeKL-SICOMP09} showed that for any linear combinatorial optimization problem for which we know an $\alpha$-approximation offline algorithm, we can also obtain an $\alpha$-competitive online combinatorial optimization algorithm against the benchmark that always plays a fixed feasible solution chosen in hindsight. Their result does not apply to our setting as our objective is not additive over time steps.




Finally, we remark that vector losses also appear in the classical Blackwell's approachability theorem~\cite{Blackwell-Journal56}. Due to the work of \cite{AbernethyBH-COLT11}, it is known this is equivalent to no-regret learning. \cite{AD-EC14} exploit this connection to design a fast algorithm for stochastic \BwK. These techniques, however, seem less relevant for our adversarial arrival problems where sublinear regrets are impossible.



\section{Our Approach via a 1-d Surrogate Online Learning Problem}
\label{sec:surrogateGame}

The idea of our algorithms is to reduce the multi-dimensional problem with vector costs to a single dimensional problem. In each step $t$, each action $i$ is assigned a scalar loss $c^{(t)}_i \in [0, \| \one \|_p ]$. This loss will mirror the increase of the $\ell_p$ norm that would be incurred by choosing action $i$ in step $t$. In this surrogate problem, standard online learning algorithms can be applied.

In more detail, for any time $t\in \{0, \ldots, T\}$ and any action sequence $x^{(1)}, \ldots, x^{(t-1)}$  of the algorithm, let $\load^{(t)} := \sum_{s = 1}^t C^{(s)} \cdot x^{(s)}$ denote the load vector after step $t$. That is, in the load balancing setting, our goal is to minimize $\lVert \load^{(T)} \rVert_p$.

To define the cost of an action, we use a smooth approximation $\Psi (\cdot)$ of the $\ell_p$ norm, which we introduce below. We set the cost of action/expert $i\in [n]$ in step $t$ in the surrogate problem to be 
\[  c_i^{(t)} := \langle C^{(t)} e_i , \nabla \Psi (\load^{(t-1)}) \rangle,\]
 where $e_i$ is the $i$-th unit vector. The idea is to choose $\Psi$ so that $\sum_{t = 1}^T \langle C^{(t)} x^{(t)}, \nabla \Psi (\load^{(t-1)}) \rangle \approx \lVert \load^{(T)} \rVert_p$.
Note that $c^{(t)}$ depends on $x^{(1)}, \ldots, x^{(t-1)}$ via $\load^{(t-1)}$. So the surrogate problem is against  an \emph{adaptive} (non-oblivious) adversary.

\paragraph{A Smooth Approximation of the Norm.}
Let us first see an example why smoothening the $\ell_p$ norm is important. Consider $p=\infty$ and  there are only two actions: the first action puts $(1-\epsilon)$ load on every dimension for $\epsilon \rightarrow 0$ and the second action puts $1$ load on one of the $d$ dimensions chosen at random. Here a greedy algorithm w.r.t. the $\ell_\infty$ norm will chose the first action (to save an $\epsilon$), but we should have chosen the second action. Hence the greedy algorithm is $\Omega(d)$-competitive.

We use a smooth approximation of $\ell_p$ norm due to~\cite{Molinaro-SODA17}. 
Fixing $\epsilon > 0$, define for any   $\load \in \reals^d_{\geq 0}$,
\[ \Psi(\load) ~~:=~~ \frac{p}{\epsilon} \Big( \Big\| \one + \frac{\epsilon \load}{p} \Big\|_p - 1 \Big) ~~=~~ \Big\| \frac{p}{\epsilon} \one + \load \Big\|_p - \frac{p}{\epsilon}.\] 
 In particular,  for $p \to \infty$, we have $\Psi(\load) \to \frac{1}{\epsilon} \ln \big( \sum_j \exp(\epsilon \Lambda_j) \big)$, which is a common smooth approximation of the $\ell_\infty$ norm. Sometimes we abbreviate $\Psi^{(t)} := \Psi(\load^{(t)})$. 

The following fact follows from the triangle inequality.

\begin{fact}(Additive Approximation~\cite{Molinaro-SODA17}) \label{fact:PsiToLoad}
For any integer $p\geq 1$ and load $\load \in \reals^d_{\geq 0}$, we have $\| \load  \|_p \leq  \Psi(\load)  \leq  \| \load \|_p + \frac{p}{\epsilon} \cdot  \big(\|\one\|_p -1 \big)$.
\end{fact}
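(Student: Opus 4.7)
Unpacking the definition $\Psi(\load) = \|\tfrac{p}{\epsilon}\one + \load\|_p - \tfrac{p}{\epsilon}$, the stated fact is equivalent to the pair of inequalities
\[ \|\load\|_p + \tfrac{p}{\epsilon} \;\leq\; \Big\| \tfrac{p}{\epsilon}\one + \load \Big\|_p \;\leq\; \|\load\|_p + \tfrac{p}{\epsilon}\|\one\|_p. \]
My plan is to handle the two directions separately, since only the upper one is a literal triangle-inequality application.

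For the upper bound, I would just invoke Minkowski for $\|\cdot\|_p$: $\|\tfrac{p}{\epsilon}\one + \load\|_p \leq \|\tfrac{p}{\epsilon}\one\|_p + \|\load\|_p = \tfrac{p}{\epsilon}\|\one\|_p + \|\load\|_p$. Subtracting $\tfrac{p}{\epsilon}$ from both sides gives exactly the promised upper bound on $\Psi(\load)$.

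The lower bound is the substantive direction; note that the reverse triangle inequality only yields $\|\tfrac{p}{\epsilon}\one + \load\|_p \geq \tfrac{p}{\epsilon}\|\one\|_p - \|\load\|_p$, which is useless. Instead, I would use convexity of $\|\cdot\|_p$ at the point $\load$: for $\load \neq \zero$, the gradient there has coordinates $u_j = \load_j^{p-1}/\|\load\|_p^{p-1}$, and Euler's identity for the homogeneous $\|\cdot\|_p$ gives $\langle u, \load\rangle = \|\load\|_p$. Convexity (the supporting hyperplane inequality) then yields
\[ \Big\| \tfrac{p}{\epsilon}\one + \load \Big\|_p \;\geq\; \|\load\|_p + \Big\langle u, \tfrac{p}{\epsilon}\one \Big\rangle \;=\; \|\load\|_p + \tfrac{p}{\epsilon}\cdot \frac{\|\load\|_{p-1}^{p-1}}{\|\load\|_p^{p-1}}, \]
so it remains only to show $\sum_j u_j \geq 1$, equivalently $\|\load\|_{p-1} \geq \|\load\|_p$. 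For $p \geq 2$, this is immediate from the monotonicity of $\ell_q$-norms in $q$ on non-negative vectors (apply the inequality with $q_1 = p-1 \leq p = q_2$).

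The boundary cases are routine: when $\load = \zero$, both inequalities reduce to $0 \leq \tfrac{p}{\epsilon}(\|\one\|_p - 1)$, which holds since $\|\one\|_p = d^{1/p} \geq 1$; and when $p = 1$ (where the monotonicity step fails because we would need $p - 1 \geq 1$), a direct computation gives $\Psi(\load) = \sum_j (\tfrac{1}{\epsilon} + \load_j) - \tfrac{1}{\epsilon} = \|\load\|_1 + \tfrac{d-1}{\epsilon}$, which sits exactly between $\|\load\|_1$ and $\|\load\|_1 + \tfrac{1}{\epsilon}(\|\one\|_1 - 1)$ (with equality on the upper side). The main obstacle is recognizing that the lower bound is \emph{not} a direct triangle-inequality argument but needs the convexity-plus-norm-monotonicity pairing; once this is in hand, the rest is mechanical.
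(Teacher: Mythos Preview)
Your proof is correct. The paper itself provides no argument beyond the one-line remark that the fact ``follows from the triangle inequality'' (together with a citation), so there is little to compare against in detail. Your Minkowski step for the upper bound is precisely what that hint points to. For the lower bound you are right that the naive reverse triangle inequality is useless; your supporting-hyperplane argument combined with $\|\load\|_{p-1}\ge\|\load\|_p$ is valid and handles the case $p\ge 2$, with your direct computations covering $p=1$ and $\load=\zero$.

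A slightly more elementary alternative avoids gradients and the separate $p=1$ case: writing $c=\tfrac{p}{\epsilon}$ and using $\Lambda_j\le\|\load\|_p$ for each coordinate $j$ (since $\|\load\|_\infty\le\|\load\|_p$), one has $\Lambda_j+c\ge\tfrac{\Lambda_j}{\|\load\|_p}(\|\load\|_p+c)$ and hence $(\Lambda_j+c)^p\ge\tfrac{\Lambda_j^p}{\|\load\|_p^p}(\|\load\|_p+c)^p$; summing over $j$ gives $\|\load+c\one\|_p^p\ge(\|\load\|_p+c)^p$ directly for every $p\ge1$. Either route is fine.
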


\noindent Note that $p \big(\|\one\|_p -1 \big) \leq \min\{p, \ln d\}$ for all $p$. So, this is a meaningful bound even for $p \to \infty$.

The gradient of $\Psi$ is bounded like that of the function $\| \cdot\|_p$ (see Fact~\ref{fact:PsiNorm}). The advantage of $\Psi$ over $\| \cdot\|_p$ is that  changes in its gradient are also bounded (see Fact~\ref{fact:PsiSmoothness}).

\begin{fact}\label{fact:PsiNorm} (Gradient Norm \cite{Molinaro-SODA17}) For any integer $p \geq 1$,  $\|\nabla \Psi\|_q \leq 1$ for $q = (1 - \frac{1}{p})^{-1}$.
\end{fact}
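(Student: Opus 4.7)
The plan is a direct gradient computation; the fact is essentially the well-known duality statement that for any norm $\|\cdot\|$ with dual $\|\cdot\|_*$, the gradient $\nabla \|x\|$ has unit dual norm wherever the norm is smooth, instantiated for $\ell_p$. First I would write $u := \tfrac{p}{\epsilon}\one + \load$, so that $\Psi(\load) = \|u\|_p - \tfrac{p}{\epsilon}$. Because $\load \in \reals^d_{\geq 0}$, every coordinate of $u$ is at least $\tfrac{p}{\epsilon} > 0$, so $\|u\|_p = (\sum_j u_j^p)^{1/p}$ is smooth at $u$ and the constant shift drops under differentiation. Applying the chain rule,
\[ \frac{\partial \Psi}{\partial \Lambda_j}(\load) \;=\; \frac{u_j^{p-1}}{\|u\|_p^{p-1}}. \]

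Next I would raise this to the $q$-th power, sum over $j$, and use the conjugate-exponent identity $q(p-1) = p$, which is simply a restatement of $\tfrac{1}{p} + \tfrac{1}{q} = 1$. The calculation telescopes:
\[ \|\nabla \Psi\|_q^q \;=\; \frac{1}{\|u\|_p^{q(p-1)}} \sum_{j=1}^d u_j^{q(p-1)} \;=\; \frac{\sum_j u_j^p}{\|u\|_p^p} \;=\; 1, \]
so in fact $\|\nabla \Psi\|_q = 1$, which is stronger than the claimed inequality. The two boundary cases need a separate (but equally short) check. For $p = 1$, the formula degenerates to $\nabla \Psi = \one$, which has $\ell_\infty$ norm $1$. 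For the limit $p \to \infty$ (so $q \to 1$) where $\Psi$ becomes the log-sum-exp $\tfrac{1}{\epsilon}\ln \sum_j \exp(\epsilon \Lambda_j)$, the gradient is the softmax distribution and its $\ell_1$ norm is exactly $1$.

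There is no real obstacle here: the statement is essentially $\ell_p$/$\ell_q$ duality, and the only point to be careful about is that the $\ell_p$ norm must be differentiable where we evaluate the gradient. The shift by $\tfrac{p}{\epsilon}\one$ built into $\Psi$ is precisely what forces $u$ to be coordinate-wise positive and hence makes the gradient well-defined on all of $\reals^d_{\geq 0}$, which is what lets the clean one-line identity above go through for every integer $p \geq 1$.
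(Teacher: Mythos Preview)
Your argument is correct: the direct computation of $\partial_j \Psi(\load) = u_j^{p-1}/\|u\|_p^{p-1}$ followed by the identity $q(p-1)=p$ gives $\|\nabla\Psi\|_q = 1$ exactly, and the shift by $\tfrac{p}{\epsilon}\one$ is what guarantees smoothness on all of $\reals^d_{\ge 0}$. The paper does not actually supply a proof of this fact; it is stated as a citation to \cite{Molinaro-SODA17}, so there is nothing to compare against beyond noting that your one-line duality computation is the standard way to verify it.
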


\begin{fact} (Gradient Stability \cite{Molinaro-SODA17}) \label{fact:PsiSmoothness}
For any integer $p\geq 1$, load $\load \in \reals^d_{\geq 0}$ and load increase $\textbf{z} \in [0,1]^d$,  we have coordinate-wise $\nabla \Psi(\load + \textbf{z}) \leq (1 + \epsilon) \cdot \nabla \Psi(\load)$.
\end{fact}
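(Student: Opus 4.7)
The plan is to compute $\nabla \Psi$ explicitly and then compare the two gradients coordinate by coordinate. Writing $a_i := \tfrac{p}{\epsilon} + \Lambda_i$ for the $i$-th coordinate of $\tfrac{p}{\epsilon}\one + \load$, direct differentiation of $\Psi(\load) = (\sum_j a_j^p)^{1/p} - \tfrac{p}{\epsilon}$ yields
\[
\nabla_i \Psi(\load) ~=~ \frac{a_i^{p-1}}{\bigl\|\tfrac{p}{\epsilon}\one + \load\bigr\|_p^{p-1}}.
\]
So the claim reduces to showing, for each coordinate $i$, that
\[
\frac{(a_i + z_i)^{p-1}}{\bigl\|\tfrac{p}{\epsilon}\one + \load + \mathbf{z}\bigr\|_p^{p-1}} ~\leq~ (1+\epsilon) \cdot \frac{a_i^{p-1}}{\bigl\|\tfrac{p}{\epsilon}\one + \load\bigr\|_p^{p-1}}.
\]

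First I would dispose of the denominators: since $\mathbf{z} \geq \bo$ coordinate-wise and the $\ell_p$ norm is monotone on the nonnegative orthant, we have $\bigl\|\tfrac{p}{\epsilon}\one + \load + \mathbf{z}\bigr\|_p \geq \bigl\|\tfrac{p}{\epsilon}\one + \load\bigr\|_p$, so the denominator ratio is at most $1$ and can simply be dropped. For the numerators, I would exploit the crucial feature of the shift in the definition of $\Psi$: the base value satisfies $a_i \geq \tfrac{p}{\epsilon}$, hence combined with $z_i \leq 1$ we get the multiplicative bound
\[
a_i + z_i ~\leq~ a_i + 1 ~\leq~ a_i\Bigl(1 + \tfrac{\epsilon}{p}\Bigr).
\]
Raising to the $(p-1)$-th power yields a blow-up factor of at most $(1 + \tfrac{\epsilon}{p})^{p-1}$, which by the standard inequality $(1 + x/n)^n \leq e^x$ is bounded by $e^\epsilon$.

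Putting these two pieces together gives $\nabla_i \Psi(\load + \mathbf{z}) \leq e^\epsilon \, \nabla_i \Psi(\load)$; in the relevant regime of small $\epsilon$ this lies within the claimed $(1+O(\epsilon))$ factor, and one can rescale $\epsilon$ by an absolute constant if the exact coefficient $(1+\epsilon)$ is desired. The entire argument is routine once the gradient is written down — the only substantive step is the numerator bound, and it rests on nothing more than the shift $\tfrac{p}{\epsilon}\one$ baked into $\Psi$. This is not a coincidence: without the shift, a unit perturbation on a near-zero coordinate of $\load$ could inflate $\nabla_i\Psi$ arbitrarily, and Molinaro's smoothing is calibrated to the unit-cube perturbation regime precisely so that stability with respect to single-step load increases holds.
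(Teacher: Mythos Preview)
Your proof is correct and matches the approach the paper itself uses. The paper does not actually prove Fact~\ref{fact:PsiSmoothness} --- it is cited from \cite{Molinaro-SODA17} --- but the paper \emph{does} prove the generalization Fact~\ref{fact:PsiNewSmoothness} for the $\|\cdot\|_{p,r}$ norm, and that proof proceeds exactly as yours: write out the gradient explicitly, observe that the denominator $\Phi^{1-1/r}$ only increases under a nonnegative perturbation, and bound the numerator growth by $(1+\delta)^{p-1} \leq \exp(\epsilon)$ using the built-in shift. Your observation that the argument delivers $e^\epsilon$ rather than the stated $1+\epsilon$ is also consistent with the paper: its proof of Fact~\ref{fact:PsiNewSmoothness} likewise concludes only $(1+O(\epsilon))$, so the exact constant in the statement of Fact~\ref{fact:PsiSmoothness} should be read up to absorbing an absolute factor into $\epsilon$.
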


There are several other ways known to get similar smooth approximations, see \cite{Nesterov-MP05}. For our purposes it is  convenient that the gradient stability gives a multiplicative approximation.

\paragraph{Implications for the Surrogate Problem.}

Recall that we set the cost of action/expert $i\in [n]$ in step $t$ to be $c_i^{(t)} = \langle C^{(t)} e_i , \nabla \Psi (\load^{(t-1)}) \rangle$.
As an immediate consequence of gradient stability, we can bound the increase of $\Psi$ by the cost in the surrogate problem as follows.
\begin{claim}  \label{claim:PsiSmoothness}
In any step $t\in[T]$, the change in $\Psi$ is approximated by the cost in the surrogate problem as $\Psi(\load^{(t)}) -  \Psi(\load^{(t-1)})  \leq   (1 + \epsilon) \cdot \langle C^{(t)} x^{(t)}, \nabla \Psi (\load^{(t-1)}) \rangle$.
\end{claim}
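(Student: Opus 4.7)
The plan is to combine convexity of $\Psi$ (which upper-bounds the increment by a linear term at the new point) with the gradient-stability fact (which lets us replace that gradient by the one at the old point, paying a factor of $1+\epsilon$).

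First I would check the two elementary facts about the increment $\bz := C^{(t)} x^{(t)} = \load^{(t)} - \load^{(t-1)}$: (i) $\bz \in [0,1]^d$ coordinate-wise, since $C^{(t)} \in [0,1]^{d \times n}$ and $x^{(t)} \in \Delta_n$, so each coordinate is a convex combination of numbers in $[0,1]$; and (ii) $\bz \geq \bo$ coordinate-wise. Fact (i) is exactly what is needed to invoke Fact~\ref{fact:PsiSmoothness} with load $\load^{(t-1)}$ and load increase $\bz$. Fact (ii) will be used to preserve inequalities under inner products with $\nabla \Psi$ (whose coordinates are nonnegative, since $\Psi$ is coordinate-wise increasing).

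Next I would use that $\Psi$ is convex (it is a composition of a translation with the $\ell_p$ norm, minus a constant). Convexity gives the ``upper-tangent at the new point'' inequality
\[
\Psi(\load^{(t-1)}) \;\geq\; \Psi(\load^{(t)}) + \langle \load^{(t-1)} - \load^{(t)}, \nabla \Psi(\load^{(t)}) \rangle,
\]
which rearranges to
\[
\Psi(\load^{(t)}) - \Psi(\load^{(t-1)}) \;\leq\; \langle \bz, \nabla \Psi(\load^{(t)}) \rangle.
\]
Now I would apply Fact~\ref{fact:PsiSmoothness} to write $\nabla \Psi(\load^{(t)}) = \nabla \Psi(\load^{(t-1)} + \bz) \leq (1+\epsilon) \nabla \Psi(\load^{(t-1)})$ coordinate-wise, and use $\bz \geq \bo$ to conclude
\[
\langle \bz, \nabla \Psi(\load^{(t)}) \rangle \;\leq\; (1+\epsilon)\, \langle \bz, \nabla \Psi(\load^{(t-1)}) \rangle \;=\; (1+\epsilon)\, \langle C^{(t)} x^{(t)}, \nabla \Psi(\load^{(t-1)}) \rangle,
\]
which is exactly the claim.

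The only subtlety worth mentioning is the sign condition: the coordinate-wise inequality $\nabla \Psi(\load^{(t)}) \leq (1+\epsilon) \nabla \Psi(\load^{(t-1)})$ only translates to an inner-product inequality in the desired direction because $\bz$ is coordinate-wise nonnegative; this is automatic from the problem setup but is the one non-routine thing to check. Everything else (convexity of the translated $\ell_p$ norm and nonnegativity of $\nabla \Psi$) is standard, so I expect no real obstacle beyond cleanly assembling these two inequalities.
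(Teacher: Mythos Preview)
Your proof is correct and follows essentially the same approach as the paper: apply convexity of $\Psi$ at the new point $\load^{(t)}$ to get $\Psi(\load^{(t)}) - \Psi(\load^{(t-1)}) \leq \langle C^{(t)} x^{(t)}, \nabla \Psi(\load^{(t)})\rangle$, then use gradient stability (Fact~\ref{fact:PsiSmoothness}) together with $C^{(t)} x^{(t)} \in [0,1]^d$ to replace $\nabla \Psi(\load^{(t)})$ by $(1+\epsilon)\nabla \Psi(\load^{(t-1)})$. Your added remarks on why $C^{(t)} x^{(t)}$ lies in $[0,1]^d$ and why its nonnegativity is needed for the inner-product step are exactly the details the paper leaves implicit.
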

\begin{proof}
Applying convexity, we get $\Psi(\load^{(t-1)}) \geq \Psi(\load^{(t)}) + \langle \load^{(t-1)} - \load^{(t)}, \nabla \Psi (\load^{(t)}) \rangle$. Note that $\load^{(t-1)} - \load^{(t)} = - C^{(t)} x^{(t)}$, so $\Psi(\load^{(t)}) -  \Psi(\load^{(t-1)}) \leq \langle C^{(t)} x^{(t)}, \nabla \Psi (\load^{(t)}) \rangle$. The claim now follows by gradient stability (Fact~\ref{fact:PsiSmoothness}) because $0 \leq (C^{(t)} x^{(t)})_j \leq 1$ for all $j$. 
\end{proof}

Applying this bound repeatedly, we can bound the final value of $\Psi$ on the sequence $x^{(1)}, \ldots, x^{(T)}$ in terms of the cost incurred by the algorithm in the surrogate problem as
\begin{equation}
\textstyle
    \Psi(\load^{(T)}) -  \Psi(\load^{(0)})  ~~\leq~~   (1 + \epsilon) \cdot { \sum_{t = 1}^T} \langle C^{(t)} x^{(t)}, \nabla \Psi (\load^{(t-1)}) \rangle.
\end{equation}

As a final remark, we give a bound on the costs of actions in the surrogate problem.
\begin{claim}  \label{claim:LipschitznessOfGame}
For all $t \in [T]$ and all $i \in [n]$, we have $0 \leq c_i^{(t)} \leq \| \one \|_p $. 
\end{claim}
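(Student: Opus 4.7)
The plan is to verify the two inequalities separately using the explicit form of $\nabla \Psi$ together with Hölder's inequality and the gradient-norm bound already stated as Fact~\ref{fact:PsiNorm}.

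For the lower bound $c_i^{(t)} \geq 0$, I would first observe that $C^{(t)} e_i$ is just the $i$-th column of the cost matrix $C^{(t)}\in [0,1]^{d\times n}$, so it is a coordinate-wise nonnegative vector. It then suffices to argue that $\nabla \Psi(\load^{(t-1)})$ is also coordinate-wise nonnegative. This follows directly from the closed form $\Psi(\load) = \|\tfrac{p}{\epsilon}\one + \load\|_p - \tfrac{p}{\epsilon}$: differentiating the $\ell_p$ norm, the $j$-th partial is $\big(\tfrac{p}{\epsilon} + \Lambda_j\big)^{p-1} / \|\tfrac{p}{\epsilon}\one + \load\|_p^{p-1}$, which is nonnegative whenever $\load \geq \zero$, and $\load^{(t-1)} = \sum_{s<t} C^{(s)} x^{(s)}$ is a nonnegative combination of nonnegative vectors. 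Hence the inner product of two nonnegative vectors is nonnegative.

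For the upper bound, the natural tool is Hölder's inequality. Letting $q = (1-1/p)^{-1}$ be the dual exponent, I would write
\[
c_i^{(t)} \;=\; \langle C^{(t)} e_i,\, \nabla \Psi(\load^{(t-1)}) \rangle \;\leq\; \|C^{(t)} e_i\|_p \cdot \|\nabla \Psi(\load^{(t-1)})\|_q.
\]
The second factor is at most $1$ by Fact~\ref{fact:PsiNorm}. For the first factor, since every entry of $C^{(t)} e_i$ lies in $[0,1]$, we have $\|C^{(t)} e_i\|_p \leq \|\one\|_p$ (monotonicity of the $\ell_p$ norm under coordinate-wise domination). Combining the two estimates gives $c_i^{(t)} \leq \|\one\|_p$, as required.

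There is no real obstacle here; the argument is essentially a one-line application of Hölder, after noting that both vectors in the inner product are nonnegative and invoking the already-established bound $\|\nabla \Psi\|_q \leq 1$. The only thing to be careful about is to use the dual pair $(p,q)$ consistently so that the gradient bound from Fact~\ref{fact:PsiNorm} matches the norm used on $C^{(t)} e_i$.
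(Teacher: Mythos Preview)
Your argument is correct and matches the paper's own proof essentially verbatim: the upper bound is obtained via H\"older's inequality together with Fact~\ref{fact:PsiNorm} and $\|C^{(t)} e_i\|_p \leq \|\one\|_p$, while the lower bound follows from nonnegativity of both $C^{(t)} e_i$ and $\nabla\Psi(\load^{(t-1)})$. If anything, you spell out the lower bound more carefully than the paper does.
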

\begin{proof}
By H\"older's inequality, we have for all $t \in [T]$ and all $i \in [n]$ that $c^{(t)}_i = \langle C^{(t)} e_i, \nabla \Psi (\load^{(t-1)}) \rangle$ is at most $\| C^{(t)} e_i \|_p  \cdot \| \nabla \Psi (\load^{(t-1)}) \|_q$, where $q = (1 - \frac{1}{p})^{-1}$. From Fact~\ref{fact:PsiNorm},  $\| \nabla \Psi (\load^{(t-1)}) \|_q \leq 1$. Furthermore, $0 \leq C^{(t)}_{i, j} \leq 1$ for all $i$ and $j$, resulting in $\| C^{(t)} e_i \|_p \leq \| \one \|_p$.
\end{proof}



\section{Load Balancing} \label{sec:loadBalancing}

Recall that in the load balancing problem there is a sequence of cost matrices $C^{(1)}, \ldots, C^{(T)} \in [0,1]^{d\times n}$, which are either drawn i.i.d. from an unknown distribution or defined by an adversary. Our algorithm chooses $x^{(1)}, \ldots, x^{(T)}$  to minimize $\| \load^{(T)} \|_p$, where $\load^{(t)} = \sum_{s = 1}^t C^{(s)} \cdot x^{(s)}$. The point of comparison is any fixed action benchmark $x^\ast \in \Delta_n$, and $\OPTOLVC$ is defined in Eq.~\eqref{eq:defnOPT}.


To solve the load balancing problem, we can apply any no-regret algorithm that is able to cope with an adaptive adversary in the surrogate problem. This ensures that in the surrogate game in hindsight it would not have been much better to always have chosen a different action. Due to Claim~\ref{claim:LipschitznessOfGame}, we have $0 \leq \frac{1}{\| \one \|_p} c_i^{(t)} \leq 1$. As a consequence, $\sum_{t = 1}^T c^{(t)} x^{(t)} \leq \min_{x \in \Delta_n} \sum_{t = 1}^T c^{(t)} x + \| \one \|_p \cdot \Regret$, where $\Regret$ denotes the regret bound for the respective one-dimensional problem in which the costs are bounded between $0$ and $1$. So, for any no-regret algorithm, $\Regret = o(T)$. 

Note that depending on the feedback, different online learning algorithms can be applied. If we get to know $C^{(t)}$ entirely, we can compute the entire vector $c^{(t)}$ and pass it on to the algorithm, which correspond to experts feedback. In this case, Hedge~\cite{FS-GEB99} guarantees $\Regret = O(\sqrt{T \log n})$. If we only get to know $C^{(t)} x^{(t)}$, we can still compute $c^{(t)} x^{(t)}$, which corresponds to bandits feedback. Now, Exp3.P~\cite{AuerCFS-SICOMP02} guarantees $\Regret = \tilde{O}(\sqrt{T \cdot n})$ with high probability.

Combining this regret guarantee with Claim~\ref{claim:PsiSmoothness}, we can bound the final value of $\Psi$ in terms of the cost of any benchmark solution $x^\ast$ in the surrogate problem. In particular, letting $x^\ast$ denote the choice of actions $x^\ast$ that minimizes $\lVert \sum_{t=1}^T C^{(t)} x^\ast \rVert_p$, we have
\begin{align} \Psi(\load^{(T)}) & \leq \Psi(\load^{(0)}) +  ( 1+  \epsilon)\cdot {\textstyle \sum_{t = 1}^T} \langle C^{(t)} x^{(t)}, \nabla \Psi (\load^{(t-1)}) \rangle  \notag \\
& \leq  ( 1+  \epsilon)\cdot {\textstyle \sum_{t = 1}^T}  \langle C^{(t)} x^\ast, \nabla \Psi (\load^{(t-1)}) \rangle  + \| \one \|_p \cdot \Regret   + \Psi(\load^{(0)}). \label{eq:PsiRegret}
\end{align}

The difficulty is to relate the cost $\sum_{t=1}^T  \langle C^{(t)} x^\ast, \nabla \Psi (\load^{(t-1)}) \rangle$ incurred by $x^\ast$ in the surrogate game  to the $\ell_p$ norm of the load it would have generated. Importantly, we cannot apply Claim~\ref{claim:PsiSmoothness} for such a comparison because the costs in the surrogate game are still defined via loads $\load^{(0)}, \ldots, \load^{(T-1)}$ generated by the algorithm and not by $x^\ast$.

In the stochastic setting, we can apply independence between steps. This lets us derive that $ \E [ \| \load^{(T)} \|_p ] \leq ( 1+  \epsilon)\cdot \OPTOLVC +   \| \one \|_p \cdot \Regret  + 2 \frac{p}{\epsilon} \cdot  \big(\|\one\|_p -1 \big)$ in \S\ref{sec:StochLoadBalancing}. In the adversarial setting we have to make much more explicit use of the definition of $\Psi$, which lets us derive $ \| \load^{(T)} \|_p = O( \min\{p, \log d\} \cdot \OPTOLVC + \| \one \|_p \cdot \Regret)$ in \S\ref{sec:AdvLoadBalancing}.


\subsection{Stochastic Arrivals} \label{sec:StochLoadBalancing}

We first consider the stochastic setting where in each time step the cost matrix $C^{(t)} \in [0,1]^{d\times n}$  is sampled i.i.d. from some unknown distribution. For any benchmark $x^\ast$, we define $\OPTOLVC :=  \E \big[ \big\|\sum_{t = 1}^T C^{(t)} \cdot x^\ast \big\|_p \big]$. We derive the following bound on the $\ell_p$ norm of the load.

\begin{theorem}\label{prop:stochLoad}
For stochastic load balancing, any no-regret algorithm applied to the surrogate problem guarantees a load bound of $\E [ \| \load^{(T)} \|_p ] \leq ( 1+  \epsilon)\cdot\OPTOLVC +   \| \one \|_p \cdot \Regret  + 2 \frac{p}{\epsilon} \cdot  \big(\|\one\|_p -1 \big)$, where $\Regret$ is the regret of one-dimensional online learning.
\end{theorem}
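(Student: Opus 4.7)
The plan is to take expectations in inequality~\eqref{eq:PsiRegret} and exploit the i.i.d.\ structure to bound the expected benchmark cost in the surrogate game by $\OPTOLVC$. The key observation is that, although the surrogate costs $c^{(t)}$ depend on $\load^{(t-1)}$ (which is why the surrogate problem is adaptive), the fresh draw $C^{(t)}$ is \emph{independent} of $\load^{(t-1)}$: both $\load^{(t-1)}$ and the algorithm's action $x^{(t)}$ are measurable with respect to the history $C^{(1)},\ldots,C^{(t-1)}$, which is independent of $C^{(t)}$ under i.i.d.\ arrivals.

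First I would invoke the tower property, with $\bar C := \E[C^{(t)}]$, to write
\[
\E\big[\langle C^{(t)} x^\ast,\, \nabla \Psi(\load^{(t-1)}) \rangle\big] \;=\; \E\big[\langle \bar C x^\ast,\, \nabla \Psi(\load^{(t-1)}) \rangle\big].
\]
Then H\"older's inequality combined with Fact~\ref{fact:PsiNorm} gives $\langle \bar C x^\ast, \nabla \Psi(\load^{(t-1)}) \rangle \leq \|\bar C x^\ast\|_p \cdot \|\nabla\Psi(\load^{(t-1)})\|_q \leq \|\bar C x^\ast\|_p$ for $q=(1-\tfrac{1}{p})^{-1}$, so summing over $t\in[T]$ yields
\[
\E\Big[\sum_{t=1}^T \langle C^{(t)} x^\ast, \nabla \Psi(\load^{(t-1)}) \rangle\Big] \;\leq\; T\,\|\bar C x^\ast\|_p \;=\; \Big\|\E\Big[\sum_{t=1}^T C^{(t)} x^\ast\Big]\Big\|_p.
\]
Jensen's inequality applied to the convex map $\|\cdot\|_p$ upper-bounds this by $\E[\|\sum_t C^{(t)} x^\ast\|_p] = \OPTOLVC$.

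Plugging into~\eqref{eq:PsiRegret} and taking expectations gives $\E[\Psi(\load^{(T)})] \leq (1+\epsilon)\OPTOLVC + \|\one\|_p\cdot\Regret + \Psi(\load^{(0)})$. A direct computation shows $\Psi(\load^{(0)}) = \Psi(\zero) = \frac{p}{\epsilon}(\|\one\|_p - 1)$, and Fact~\ref{fact:PsiToLoad} converts the left-hand side via $\|\load^{(T)}\|_p \leq \Psi(\load^{(T)})$; the additional additive $\frac{p}{\epsilon}(\|\one\|_p-1)$ in the stated bound can be picked up either as slack in the intermediate steps or by invoking the companion direction $\Psi(\load) \leq \|\load\|_p + \tfrac{p}{\epsilon}(\|\one\|_p-1)$ where convenient, yielding the factor of $2$ in the theorem.

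The only delicate point is the independence step that allows replacing $C^{(t)}$ by $\bar C$ under the expectation; this is precisely where the i.i.d.\ assumption is used, and it is what fails in the adversarial setting addressed in \S\ref{sec:AdvLoadBalancing}. Once the decoupling is in hand, the rest is a routine composition of H\"older and Jensen, together with the already-established consequences of smoothness of $\Psi$ that went into~\eqref{eq:PsiRegret}.
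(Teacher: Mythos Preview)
Your proposal is correct and follows essentially the same route as the paper: the paper isolates the decoupling step as Claim~\ref{claim:StocOPTBound}, proving $\E[\langle C^{(t)} x^\ast,\nabla\Psi(\load^{(t-1)})\rangle]\le \|\E[C^{(t)} x^\ast]\|_p \le \OPTOLVC/T$ via exactly your combination of independence, H\"older with Fact~\ref{fact:PsiNorm}, and Jensen for the convex norm, and then plugs into~\eqref{eq:PsiRegret} and applies Fact~\ref{fact:PsiToLoad} just as you do. Your remark about the factor~$2$ is also right: the argument actually yields a single copy of $\frac{p}{\epsilon}(\|\one\|_p-1)$ from $\Psi(\load^{(0)})$, and the $2$ in the statement is simply slack.
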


For the special case of $\ell_\infty$ norm, we get a bound of $\E [ \| \load^{(T)} \|_\infty ] \leq ( 1+  \epsilon)\cdot\OPTOLVC + \Regret  + 2 \frac{\ln d}{\epsilon}$ because $p(\|\one\|_p -1) \to \ln d$ as $p \to \infty$. 
At the heart of  Theorem~\ref{prop:stochLoad} is the following claim (which is inspired from  \cite{Molinaro-SODA17}) relating the surrogate cost of $x^\ast$  to $\OPTOLVC$.

\begin{claim} \label{claim:StocOPTBound}
If $C^{(1)}, \ldots, C^{(T)}$ are independent, identically distributed random variables, 
\[ 
\E \Big[ \langle C^{(t)} x^\ast , \nabla \Psi (\load^{(t-1)}) \rangle  \Big] ~~\leq ~~ \Big\| \E_{C^{(t)}}[ C^{(t)} x^\ast] \Big\|_p ~~\leq ~~ \frac{\OPTOLVC}{T}.
\]
\end{claim}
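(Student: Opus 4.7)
My plan is to prove the two inequalities separately: the first uses independence together with Hölder's inequality and the gradient-norm bound, while the second is a direct application of Jensen's inequality (convexity of the norm) combined with the i.i.d. assumption.

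For the first inequality, I would condition on $\load^{(t-1)}$. The crucial observation is that $\load^{(t-1)} = \sum_{s=1}^{t-1} C^{(s)} x^{(s)}$ is a function of $C^{(1)}, \ldots, C^{(t-1)}$ (and the algorithm's randomness), which by the i.i.d. assumption is independent of $C^{(t)}$. Hence conditional on $\load^{(t-1)}$, the gradient $\nabla \Psi(\load^{(t-1)})$ is a fixed vector and we can pull it out of the expectation over $C^{(t)}$:
\[
\E_{C^{(t)}}\bigl[\langle C^{(t)} x^\ast, \nabla \Psi(\load^{(t-1)}) \rangle \mid \load^{(t-1)}\bigr] = \bigl\langle \E_{C^{(t)}}[C^{(t)} x^\ast], \nabla \Psi(\load^{(t-1)}) \bigr\rangle.
\]
Now I would apply Hölder's inequality with exponents $p$ and $q = (1 - 1/p)^{-1}$ together with Fact~\ref{fact:PsiNorm}, giving a bound of $\|\E_{C^{(t)}}[C^{(t)} x^\ast]\|_p \cdot \|\nabla \Psi(\load^{(t-1)})\|_q \leq \|\E_{C^{(t)}}[C^{(t)} x^\ast]\|_p$. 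Since this upper bound no longer depends on $\load^{(t-1)}$, taking the outer expectation leaves the bound unchanged.

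For the second inequality, I would use that the $C^{(t)}$'s are i.i.d., so $\E[C^{(t)} x^\ast] = \tfrac{1}{T}\E[\sum_{s=1}^T C^{(s)} x^\ast]$. Then by convexity of $\|\cdot\|_p$ and Jensen's inequality,
\[
\bigl\|\E_{C^{(t)}}[C^{(t)} x^\ast]\bigr\|_p = \tfrac{1}{T}\Bigl\|\E\bigl[\textstyle\sum_{s=1}^T C^{(s)} x^\ast\bigr]\Bigr\|_p \leq \tfrac{1}{T}\, \E\Bigl[\bigl\|\textstyle\sum_{s=1}^T C^{(s)} x^\ast\bigr\|_p\Bigr] = \tfrac{\OPTOLVC}{T},
\]
by the definition of $\OPTOLVC$ in the stochastic setting (Eq.~\eqref{eq:defnOPT}).

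There is no real obstacle here: the claim is essentially a clean consequence of independence, Hölder's inequality, and Jensen's inequality. The only subtle point to spell out carefully is the independence step—explicitly noting that $\nabla \Psi(\load^{(t-1)})$ is measurable with respect to $C^{(1)},\ldots,C^{(t-1)}$ and the algorithm's internal randomness, and therefore independent of $C^{(t)}$—since this is exactly the property that makes the per-step surrogate cost a valid proxy for $\OPTOLVC/T$ in the stochastic regime.
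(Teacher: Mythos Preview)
Your proposal is correct and follows essentially the same approach as the paper: both arguments use independence of $C^{(t)}$ from $\load^{(t-1)}$ to pull the gradient out of the inner expectation, then apply H\"older's inequality with Fact~\ref{fact:PsiNorm} for the first inequality, and Jensen's inequality (convexity of $\|\cdot\|_p$) together with the i.i.d.\ assumption for the second. The only cosmetic difference is that the paper conditions on $C^{(1)},\ldots,C^{(t-1)}$ rather than on $\load^{(t-1)}$, but this is equivalent.
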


\begin{proof}
We first take the expectation only over $C^{(t)}$. As $\nabla \Psi (\load^{(t-1)})$ only depends on the past actions, this yields for the conditional expectation that
\[
\E_{C^{(t)}}[ \langle C^{(t)} x^\ast , \nabla \Psi (\load^{(t-1)}) \rangle  ] =  \langle \E_{C^{(t)}}[C^{(t)} x^\ast] , \nabla \Psi (\load^{(t-1)}) \rangle.
\]
Here, $\E_{C^{(t)}}[C^{(t)} x^\ast]$ denotes the component-wise conditional expectation. As $C^{(1)}, \ldots, C^{(T)}$ are independent, this conditional expectation is equal to the unconditional one. That is,  $\E_{C^{(t)}}[C^{(t)} x^\ast] = \E[C^{(t)} x^\ast]$.  

Applying H\"older's inequality and using $\|\nabla \Psi (\load^{(t-1)})\|_q \leq 1$, we get for any $\load^{(t-1)}$ that
\[\langle \E[ C^{(t)} x^\ast] , \nabla \Psi (\load^{(t-1)}) \rangle ~~\leq~~  \| \E[ C^{(t)} x^\ast] \|_p  \cdot \|\nabla \Psi (\load^{(t-1)})\|_q ~~\leq~~ \| \E[ C^{(t)} x^\ast] \|_p. 
\]
So, in combination
$\E_{C^{(t)}}[ \langle C^{(t)} x^\ast , \nabla \Psi (\load^{(t-1)}) \rangle  ] \leq \| \E[ C^{(t)} x^\ast] \|_p $. 
The right-hand side is not a random variable. So taking  expectation over $C^{(1)}, \ldots, C^{(t-1)}$ proves the first inequality. 

For the second inequality, since $\|\cdot \|_p$ is a convex function it satisfies $\E[\|\cdot \|_p] \geq \|\E[\cdot] \|_p$. Hence, 
\begin{align*}
     \OPTOLVC ~~ =~~  \E \Big[ \Big\|\sum_{t = 1}^T C^{(t)} \cdot x^\ast \Big\|_p \Big]  ~~ \geq ~~  \Big\| \E \Big[ \sum_{t = 1}^T C^{(t)} \cdot x^\ast \Big]  \Big\|_p ~~ = ~~   T \cdot \Big\| \E \big[ C^{(t)} \cdot x^\ast \big]  \Big\|_p,
\end{align*} 
which completes the proof of  Claim~\ref{claim:StocOPTBound}.
\end{proof}

Now the proof of Theorem~\ref{prop:stochLoad} is straightforward.

\begin{proof}[Proof of Theorem~\ref{prop:stochLoad}]
Combining  Claim~\ref{claim:StocOPTBound} with Eq.~\eqref{eq:PsiRegret} and taking expectations,
\begin{align*} \E[\Psi^{(T)}] &\leq   ( 1+  \epsilon)\cdot {\textstyle \sum_{t\leq T} } \E[ \langle C^{(t)} x^\ast, \nabla \Psi (\load^{(t-1)}) \rangle ] + \| \one \|_p \cdot \Regret   + \Psi^{(0)} \\
&\leq ( 1+  \epsilon)\cdot T \cdot  \frac{\OPTOLVC}{T} +    \| \one \|_p \cdot \Regret + \Psi^{(0)} \\
&=  ( 1+  \epsilon)\cdot\OPTOLVC +   \| \one \|_p \cdot \Regret  + \frac{p}{\epsilon} \cdot  \big(\|\one\|_p -1 \big).
\end{align*}
 The theorem now follows by the additive approximation property of $\Psi$ (Fact~\ref{fact:PsiToLoad}).
\end{proof}


\subsection{Adversarial Arrivals}\label{sec:AdvLoadBalancing}

Now we consider the adversarial setting where the cost matrices $C^{(1)}, \ldots, C^{(T)} \in [0,1]^{d\times n}$ are defined by an adversary. Analogous to the stochastic setting, for any benchmark $x^\ast$, we define $\OPTOLVC := \big\|\sum_{t = 1}^T C^{(t)} \cdot x^\ast \big\|_p$. The following is our main result for  adversarial load balancing. 

\begin{theorem}\label{prop:advLoad}
For adversarial load balancing \OLVCp, any no-regret algorithm applied to the surrogate problem guarantees a load bound of $\| \load^{(T)} \|_p \leq  O\big( \min\{p, \log d\} \cdot \OPTOLVC \big) + O\big( \| \one \|_p \cdot \Regret \big)$, where $\Regret$ is the regret of the one-dimensional online learning problem.
\end{theorem}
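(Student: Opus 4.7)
The plan is to start from inequality~\eqref{eq:PsiRegret}, which already bounds
\[
\Psi(\load^{(T)}) \le (1+\epsilon) \sum_{t=1}^T \langle C^{(t)} x^*, \nabla \Psi(\load^{(t-1)})\rangle + \|\one\|_p \cdot \Regret + \Psi(\load^{(0)}),
\]
so that the entire task reduces to upper-bounding the surrogate cost $S := \sum_{t} \langle C^{(t)} x^*, \nabla \Psi(\load^{(t-1)})\rangle$ by $O(\min\{p,\log d\}) \cdot \OPTOLVC$ plus lower-order slack. Once $S$ is bounded, Fact~\ref{fact:PsiToLoad} converts the resulting bound on $\Psi(\load^{(T)})$ into one on $\|\load^{(T)}\|_p$, absorbing the initial potential $\Psi(\load^{(0)}) = \tfrac{p}{\epsilon}(\|\one\|_p - 1) = O(\min\{p,\log d\}/\epsilon)$ into the $O(\|\one\|_p \cdot \Regret)$ additive slack for a suitable choice of $\epsilon$.

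To bound $S$ I plan to use the explicit form of the gradient,
\[
\nabla\Psi(\load)_j \;=\; \frac{(p/\epsilon + \load_j)^{p-1}}{\|p/\epsilon\,\one + \load\|_p^{p-1}},
\]
and argue separately for the two branches of $\min\{p, \log d\}$. For the $O(p)$ branch, I would invoke the convexity identity $f(a+b)^p - f(a)^p \ge p\,f(a)^{p-1} \langle b, \nabla f(a)\rangle$ for the convex power $f^p$, where $f(\cdot) = \|p/\epsilon\,\one + \cdot\|_p$, converting $\langle v^{(t)}, \nabla\Psi(\load^{(t-1)})\rangle$ into a difference of $f^p$ values divided by $p\,f(\load^{(t-1)})^{p-1}$; paired with H\"older on the numerator $\langle v^{(t)}, (p/\epsilon\,\one+\load^{(t-1)})^{p-1}\rangle \le \|v^{(t)}\|_p \cdot f(\load^{(t-1)})^{p-1}$ and a careful comparison to $f(\load^*)$, this is precisely where the $O(p)$ loss enters. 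For the $O(\log d)$ branch, I would partition $[T]$ into $K = O(\log d)$ epochs indexed by doublings of $f(\load^{(t-1)})$; within each epoch the normalized gradient $\nabla\Psi(\load^{(t-1)})$ is approximately constant up to a $(1+\epsilon)^{O(1)}$ factor by iterating the gradient stability of Fact~\ref{fact:PsiSmoothness}, so applying H\"older's inequality with $\|\nabla\Psi\|_q \le 1$ (Fact~\ref{fact:PsiNorm}) bounds the per-epoch contribution by $O(1)\cdot\|\sum_{t\in\text{epoch}}v^{(t)}\|_p \le O(\OPTOLVC)$, giving $O(\log d)\cdot\OPTOLVC$ in total. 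The epoch count is bounded by $O(\log d)$ self-consistently: since we are proving $\Psi(\load^{(T)}) \le O(K)\cdot\OPTOLVC + \text{slack}$, the ratio $f(\load^{(T)})/f(\load^{(0)})$ stays at most $d^{O(1)}$ in the regime of interest.

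The main obstacle, and what prevents us from simply adapting the stochastic proof (Claim~\ref{claim:StocOPTBound}), is that the gradients in $S$ are evaluated at the \emph{algorithm}'s past loads $\load^{(t-1)}$, which have no a priori relationship to the benchmark's aggregate load $\load^* = \sum_t C^{(t)} x^*$. Without independence we cannot factor the inner product through an expectation; instead the entire argument must use the smoothness and near-homogeneity of $\Psi$ to argue that, no matter how adversarially the past loads $\load^{(1)},\ldots,\load^{(T-1)}$ were shaped, the accumulated alignment between the benchmark's per-step increments $C^{(t)}x^*$ and the algorithm's gradients $\nabla\Psi(\load^{(t-1)})$ cannot exceed $O(\min\{p,\log d\})$ times $\|\load^*\|_p$. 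Pinning down this ``adversarial alignment'' bound is the heart of the proof.
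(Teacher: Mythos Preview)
Your high-level plan---start from \eqref{eq:PsiRegret}, bound the benchmark's surrogate cost $S = \sum_t \langle C^{(t)} x^*,\nabla\Psi(\load^{(t-1)})\rangle$, then convert via Fact~\ref{fact:PsiToLoad}---is the right skeleton. But both branches of your proposed bound on $S$ have genuine gaps.

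\textbf{The $O(\log d)$ epoch argument.} Your claim that $\nabla\Psi(\load^{(t-1)})$ is ``approximately constant up to $(1+\epsilon)^{O(1)}$'' within a doubling epoch of $f$ is false. Gradient stability (Fact~\ref{fact:PsiSmoothness}) gives a $(1+\epsilon)$ factor \emph{per unit load increment}, not per doubling of the norm; a doubling epoch can span arbitrarily many steps, and individual gradient coordinates can blow up by a factor of $d$ within one epoch. Concretely, for $p\to\infty$ with $\load$ going from $0$ to $L e_1$ where $e^{\epsilon L}\approx d^2$, the value $\Psi$ merely doubles but $(\nabla\Psi)_1$ jumps from $1/d$ to $\approx 1$. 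Even with a fixed ``end-of-epoch'' weight vector and H\"older, the best you get per epoch is $2^{p-1}\cdot\OPTOLVC$, which is useless for large $p$.

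\textbf{The $O(p)$ argument.} The convexity-of-$f^p$ step you write is fine and matches the paper's first move, but the ``H\"older on the numerator'' you describe collapses to the trivial bound $\langle v^{(t)},\nabla\Psi(\load^{(t-1)})\rangle\le \|v^{(t)}\|_p$, which sums to $\sum_t\|v^{(t)}\|_p$ rather than $\|\sum_t v^{(t)}\|_p=\OPTOLVC$. The ``careful comparison to $f(\load^*)$'' is doing all the work and is left unspecified.

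\textbf{What the paper actually does.} The paper does \emph{not} bound $S$ by $O(\min\{p,\log d\})\cdot\OPTOLVC$ directly. Instead, Lemma~\ref{lemma:AdversarialLossOfOpt} bounds $S$ by $(e^{\epsilon\OPTOLVC/\|\one\|_p}-1)$ times the \emph{algorithm's} surrogate cost $S_{\mathrm{alg}}=\sum_t\langle C^{(t)}x^{(t)},\nabla\Psi(\load^{(t-1)})\rangle$ (plus $\|\one\|_p/\epsilon$). Choosing $\epsilon\approx \|\one\|_p/\OPTOLVC$ makes this factor $\le 1/4$, and feeding in the regret inequality $S_{\mathrm{alg}}\le S+\|\one\|_p\cdot\Regret$ closes a self-bounding loop to give $S_{\mathrm{alg}}=O(\OPTOLVC+\|\one\|_p\cdot\Regret)$. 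The $\min\{p,\log d\}$ factor then enters only through $\Psi^{(0)}=\frac{p}{\epsilon}(\|\one\|_p-1)$ with this $\epsilon$. The key technical ingredient you are missing for Lemma~\ref{lemma:AdversarialLossOfOpt} is the monotone-gradient aggregation trick (Lemma~\ref{lem:CaragiannisExp}): one first shifts all increments $v^{(t)}$ to a common base load $\load^{(i-1)}$ using monotonicity of $\nabla\Phi$, and then collapses $\sum_{t\le i}[\Phi(\load^{(i-1)}+v^{(t)})-\Phi(\load^{(i-1)})]$ into $\Phi(\load^{(i-1)}+\sum_{t\le i}v^{(t)})-\Phi(\load^{(i-1)})$, which is what finally lets $\|\sum_t v^{(t)}\|_p$ appear instead of $\sum_t\|v^{(t)}\|_p$.
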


Like in the stochastic setting, we have to bound the loss of $x^\ast$ in the surrogate game. Claim~\ref{claim:StocOPTBound} uses the stochastic assumptions very heavily and does not hold in this more general case. As a replacement, we show the following lemma. It disentangles the loss of $x^\ast$ in the surrogate game into a factor depending on the actual load that $x^\ast$ generates and the cost that the algorithm incurs.

\begin{lemma}
\label{lemma:AdversarialLossOfOpt}
For any action vector $x^\ast \in \Delta_n$ and $\OPTOLVC := \big\lVert \sum_{t = 1}^T C^{(t)} x^\ast \big\rVert_p$, we have
\[
\sum_{t = 1}^T \langle C^{(t)} x^\ast , \nabla \Psi (\load^{(t-1)}) \rangle  \leq  \Big( {\exp\Big(\frac{\eps  \OPTOLVC}{\| \one \|_p }  \Big) } -1 \Big)  \Big( (1+\eps)   \sum_{t = 1}^T \big\langle C^{(t)} x^{(t)} , \nabla \Psi (\load^{(t-1)}) \big\rangle + \frac{\| \one \|_p}{\eps} \Big).
\]
\end{lemma}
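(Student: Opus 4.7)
Let $\alpha := \eps/\|\one\|_p$, so the claim reads $\sum_{t} o^{(t)} \le (e^{\alpha \OPTOLVC}-1)\big((1{+}\eps) S^{(T)} + 1/\alpha\big)$, where I abbreviate $o^{(t)} := \langle C^{(t)} x^\ast, \nabla \Psi(\load^{(t-1)})\rangle$, $s^{(t)} := \langle C^{(t)} x^{(t)}, \nabla \Psi(\load^{(t-1)})\rangle$, and $S^{(t)} := \sum_{s\le t} s^{(s)}$. Set $A^{(t)} := \big\|\sum_{s\le t}C^{(s)}x^\ast\big\|_p$; by the triangle inequality this is nondecreasing, with $A^{(0)}=0$ and $A^{(T)}=\OPTOLVC$. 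My plan is an induction on $t$ maintaining the invariant
\[
\sum_{s\le t} o^{(s)} \;\le\; \big(e^{\alpha A^{(t)}}-1\big)\,\big((1+\eps)\,S^{(t)} + 1/\alpha\big),
\]
which at $t=T$ is precisely the lemma. The base case $t=0$ is trivial since both sides vanish.

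For the inductive step, the increment of the right-hand side equals $\big(e^{\alpha A^{(t)}} - e^{\alpha A^{(t-1)}}\big)\big((1{+}\eps)S^{(t-1)} + 1/\alpha\big) + \big(e^{\alpha A^{(t)}} - 1\big)(1{+}\eps)s^{(t)}$, while the left-hand increment is only $o^{(t)}$. Using the standard estimates $e^{\alpha A^{(t)}}-e^{\alpha A^{(t-1)}}\ge \alpha e^{\alpha A^{(t-1)}}(A^{(t)}{-}A^{(t-1)})$ and $e^{\alpha A^{(t)}}\ge e^{\alpha A^{(t-1)}}$, the inductive step reduces to a per-step estimate that bounds $o^{(t)}$ by a combination of the geometric OPT increment $A^{(t)}{-}A^{(t-1)}$ (weighted by the accumulated algorithmic potential) and the algorithmic increment $s^{(t)}$ (weighted by the accumulated exponential factor $e^{\alpha A^{(t-1)}}{-}1$).

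The key technical device is the exponential potential $\Phi(\load) := e^{\alpha \Psi(\load)}$. Convexity of $\Psi$ and the chain rule yield $\Phi(\load + z) - \Phi(\load) \ge \alpha \Phi(\load)\langle z, \nabla \Psi(\load)\rangle$ for $z\ge 0$, while Fact~\ref{fact:PsiNorm} gives 1-Lipschitzness, $\Psi(\load + z)-\Psi(\load) \le \|z\|_p$, hence $\Phi(\load + z) \le \Phi(\load)\, e^{\alpha \|z\|_p}$. Taking $z = L^\ast := \sum_t C^{(t)}x^\ast$ produces the uniform single-load bound $\langle L^\ast, \nabla \Psi(\load)\rangle \le (e^{\alpha \OPTOLVC}-1)/\alpha$ valid at \emph{any} $\load$. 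To convert this into the per-step form needed for the induction, I would combine the same convexity--Lipschitz sandwich (applied with $z$ equal to OPT's effective increment at time $t$) with Claim~\ref{claim:PsiSmoothness}, which ensures $(1{+}\eps)S^{(t-1)} + 1/\alpha \gtrsim \Psi^{(t-1)} + 1/\alpha \ge \Phi^{(t-1)}/\alpha$ after accounting for the initial offset $\Psi^{(0)} \le \|\one\|_p/\eps$; this is exactly the ``budget'' against which the exponential weighting $e^{\alpha A^{(t-1)}}$ is amortized.

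\textbf{Main obstacle.} The crux is extracting the factor $e^{\alpha \OPTOLVC}-1$ from OPT's $\ell_p$-norm growth $A^{(t)}$ rather than from the looser per-step sum $\sum_t \|C^{(t)}x^\ast\|_p$ (which can be $\Theta(T\|\one\|_p)$). Translating the clean single-load exponential inequality into a time-varying per-step form that telescopes correctly requires tying the ``effective'' OPT increment at step $t$---measured against the current algorithm gradient $\nabla\Psi(\load^{(t-1)})$---to the subgradient characterization of $\|\cdot\|_p$ at OPT's cumulative vector $\sum_{s\le t-1}C^{(s)}x^\ast$. This is where the interplay between the algorithmic and adversarial trajectories is genuinely subtle, and is the step most likely to require care.
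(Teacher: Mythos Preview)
Your invariant itself is correct --- it is simply the lemma applied to the length-$t$ prefix --- but the inductive \emph{step} you propose is false. The per-step increment inequality you need is
\[
o^{(t)} \;\le\; \bigl(e^{\alpha A^{(t)}}-e^{\alpha A^{(t-1)}}\bigr)\bigl((1{+}\eps)S^{(t-1)}+1/\alpha\bigr)\;+\;\bigl(e^{\alpha A^{(t)}}-1\bigr)(1{+}\eps)\,s^{(t)},
\]
and this can fail outright. Take $p=\infty$, $d=2$, two actions (one of them null). At $t=1$ let both OPT and the algorithm incur $e_1$; at $t=2$ let OPT incur $e_2$ while the algorithm plays null. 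Then $A^{(1)}=A^{(2)}=1$, so the first right-hand term vanishes, and $s^{(2)}=0$, so the second term vanishes too; yet $o^{(2)}=\langle e_2,\nabla\Psi(e_1)\rangle=1/(e^{\eps}+1)>0$. The obstacle you flag is precisely this phenomenon: the increment $A^{(t)}-A^{(t-1)}$ tracks growth of OPT's own cumulative $\ell_p$ norm (which need not move when OPT loads a non-maximal coordinate), whereas $o^{(t)}$ is measured against the \emph{algorithm's} gradient, and there is no per-step relation tying the two. Your proposed fix via the subgradient of $\|\cdot\|_p$ at OPT's cumulative vector cannot repair this, because that subgradient is orthogonal to $C^{(t)}x^\ast$ on exactly the bad steps. (Separately, the claimed bound $\Psi^{(t-1)}+1/\alpha\ge \Phi^{(t-1)}/\alpha$ with your $\Phi=e^{\alpha\Psi}$ goes the wrong way: $e^u\ge 1+u$.)

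The paper does not argue by per-step induction at all. It works with $\Phi(\load)=\sum_j(1+\tfrac{\eps}{p}\Lambda_j)^p$, so that $\Psi=\tfrac{p}{\eps}(\Phi^{1/p}-1)$ and $\nabla\Psi=\tfrac{1}{\eps}\Phi^{-1/q}\nabla\Phi$. Writing each weight $\tfrac{1}{\eps\,\Phi^{1/q}(\load^{(t-1)})}$ as a telescope $\sum_{i\ge t}a_i$ with $a_i=\tfrac{1}{\eps\Phi^{1/q}(\load^{(i-1)})}-\tfrac{1}{\eps\Phi^{1/q}(\load^{(i)})}$, the paper \emph{swaps} the $t$- and $i$-sums. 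After the swap, at each anchor time $i$ one faces the aggregate $\sum_{t\le i}\bigl(\Phi(\load^{(i-1)}+C^{(t)}x^\ast)-\Phi(\load^{(i-1)})\bigr)$, which by monotonicity of $\nabla\Phi$ (Lemma~\ref{lem:CaragiannisExp}) is at most $\Phi\bigl(\load^{(i-1)}+\sum_{t\le i}C^{(t)}x^\ast\bigr)-\Phi(\load^{(i-1)})$. Now all of OPT's load up to time $i$ appears \emph{in one shot}, and Minkowski produces the clean multiplicative factor $\exp(\eps\,\OPTOLVC/\|\one\|_p)-1$. A second telescoping argument then bounds the remaining $\sum_i a_i\,\Phi(\load^{(i-1)})$ by $(1{+}\eps)\sum_t s^{(t)}+\|\one\|_p/\eps$. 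The key structural difference from your plan is that the exponential factor is extracted once per anchor $i$ against OPT's \emph{cumulative} load vector, rather than step by step against the norm increments $A^{(t)}-A^{(t-1)}$; this is exactly what lets the argument sidestep the decoupling between $o^{(t)}$ and $A^{(t)}-A^{(t-1)}$ that breaks your induction.
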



Given Lemma~\ref{lemma:AdversarialLossOfOpt}, the remaining proof of Theorem~\ref{prop:advLoad} is rather straightforward.

\begin{proof}[Proof of Theorem~\ref{prop:advLoad}]
The one-dimensional regret bound in combination with Lemma~\ref{lemma:AdversarialLossOfOpt} gives
\begin{align*}
{\textstyle  \sum_{t = 1}^T } \langle C^{(t)} x^{(t)} , \nabla \Psi (\load^{(t-1)}) \rangle & \leq {\textstyle  \sum_{t = 1}^T } \langle C^{(t)} x^\ast , \nabla \Psi (\load^{(t-1)}) \rangle + \| \one \|_p \cdot \Regret \\
& \hspace{-4cm} \textstyle  \leq  \left( \exp\Big(\frac{\eps  \OPTOLVC}{\| \one \|_p }  \Big) -1 \right) \cdot \left( (1+\eps)  \Big( \sum_{t = 1}^T \langle C^{(t)} x^\ast , \nabla \Psi (\load^{(t-1)}) \rangle \Big) + \frac{\| \one \|_p}{\eps} \right) + \| \one \|_p \cdot \Regret .
\end{align*}
Setting $\epsilon = \min\left\{1, \frac{\| \one \|_p}{5 \cdot \OPTOLVC}\right\}$ gives $\Big( \exp\big(\frac{\eps  \OPTOLVC}{\| \one \|_p}  \big) -1 \Big) \leq \exp( \frac{1}{5} ) - 1 \leq \frac{1}{4}$. Rearranging,
\[ \textstyle
 \Big( 1 - \frac{1+\epsilon}{4} \Big) \sum_{t = 1}^T \langle C^{(t)} x^{(t)} , \nabla \Psi (\load^{(t-1)}) \rangle \quad \leq \quad \frac{\| \one \|_p}{4 \epsilon} + \| \one \|_p \cdot  \Regret,
\]
which implies
$
\sum_{t = 1}^T \langle C^{(t)} x^{(t)} , \nabla \Psi (\load^{(t-1)}) \rangle \leq \frac{\| \one \|_p}{2 \epsilon} + 2 \| \one \|_p \cdot \Regret.
$

Finally, by Claim~\ref{claim:PsiSmoothness}, we have
$
\Psi^{(T)} \leq \Psi^{(0)} + (1 + \epsilon) \sum_{t = 1}^T \langle C^{(t)} x^{(t)} , \nabla \Psi (\load^{(t-1)}) \rangle.
$
Now using Fact~\ref{fact:PsiToLoad} and $\Psi^{(0)} = \frac{p}{\eps} \big( \|\one\|_p -1 \big)$ gives
\begin{align*}
\| \load^{(T)}  \|_p ~~ \leq ~~  \Psi^{(T)} ~~ 
& \leq ~~  \frac{p}{\eps} \big( \|\one\|_p -1 \big) + (1+\epsilon) \frac{\| \one \|_p}{2 \epsilon} + 2 (1+\epsilon) \| \one \|_p \cdot \Regret \\
& \leq 5 \left( 1+ p \frac{\|\one\|_p -1}{\|\one\|_p} \right) \OPTOLVC + 4 \|\one\|_p \Regret + p(\|\one\|_p -1) + \|\one\|_p\\
&= ~~  O\big( \min\{p, \log d\} \cdot \OPTOLVC \big) + O\big( \| \one \|_p \cdot \Regret \big), 
\end{align*}
which proves Theorem~\ref{prop:advLoad}. 
\end{proof}

In the rest of the section we prove the missing Lemma~\ref{lemma:AdversarialLossOfOpt}. We first rewrite $\Psi$ in a different form, namely for all $\load \in \reals^d_{\geq 0}$, 
\[ \textstyle \Psi(\load)  = \frac{p}{\epsilon}\left( \big(\Phi(\load) \big)^{1/p} - 1 \right) \quad \text{where}\quad \Phi (\load) := \sum_{j=1}^d \left( 1 + \frac{\epsilon}{p} \Lambda_j \right)^p. 
\]
For intuition, note that $p\rightarrow \infty$ implies $\Phi(\load) \rightarrow \sum_j \exp(\epsilon \Lambda_j)$, a commonly used potential. 
So,   
\begin{align}\label{eq:gradient} 
\Big(\nabla \Psi(\load^{(t-1)} \Big)_j ~=~ { \big(1+ \frac{\eps}{p} \Lambda_j^{(t-1)} \big)^{p-1}} \cdot {\big(\Phi(\load^{(t-1)}) \big)^{-1/q}}.
\end{align} 

We will need the following property of the function $\Phi$, which follows from the monotonicity of $\nabla \Phi$ and is a slight variant of  Lemma~3.1 in~\cite{Caragiannis-SODA08}. 

\begin{lemma}(\cite{Caragiannis-SODA08}) \label{lem:CaragiannisExp} For any integer $k \geq 1$, any $\load \in \reals^d_{\geq 0}$, and any $z^{(i)} \in \reals^d_{\geq 0}$ for $i\in [k]$, we have $\sum_{i=1}^k \Big( \Phi(\load  +z^{(i)}) - \Phi( \load ) \Big) \leq    \Phi \Big(\load + \sum_{i=1}^k z^{(i)} \Big) - \Phi(\load )$.
\end{lemma}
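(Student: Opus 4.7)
The plan is to exploit the separable structure of $\Phi$, namely $\Phi(\load) = \sum_{j=1}^d \phi(\Lambda_j)$ where $\phi(u) = (1 + \frac{\epsilon}{p}u)^p$ is convex and increasing on $\reals_{\geq 0}$. Because each coordinate $\partial_j \Phi(\load) = \epsilon \bigl(1 + \frac{\epsilon}{p}\Lambda_j\bigr)^{p-1}$ depends only on $\Lambda_j$ and is nondecreasing in $\Lambda_j$, the gradient $\nabla \Phi$ is coordinate-wise nonnegative and monotone with respect to the componentwise order: if $\mathbf{a} \leq \mathbf{b}$ entrywise, then $\nabla \Phi(\mathbf{a}) \leq \nabla \Phi(\mathbf{b})$ entrywise. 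This is the ``monotonicity of $\nabla \Phi$'' alluded to in the statement.

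Next I represent each single increment via the fundamental theorem of calculus:
\[
\Phi(\load + z^{(i)}) - \Phi(\load) \;=\; \int_0^1 \bigl\langle z^{(i)}, \nabla \Phi(\load + s z^{(i)}) \bigr\rangle \, ds.
\]
Writing $Z := \sum_{i'=1}^k z^{(i')}$ and using $z^{(i')} \geq 0$ entrywise, I have $s z^{(i)} \leq s Z$ componentwise for every $s \in [0,1]$. The monotonicity of $\nabla \Phi$ then gives $\nabla \Phi(\load + s z^{(i)}) \leq \nabla \Phi(\load + s Z)$ coordinatewise, and since $z^{(i)} \geq 0$ the inner product preserves the inequality, yielding
\[
\Phi(\load + z^{(i)}) - \Phi(\load) \;\leq\; \int_0^1 \bigl\langle z^{(i)}, \nabla \Phi(\load + s Z) \bigr\rangle \, ds.
\]

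Finally I sum over $i = 1, \ldots, k$ and swap the finite sum with the integral (the integrand is bilinear in $z^{(i)}$):
\[
\sum_{i=1}^k \bigl(\Phi(\load + z^{(i)}) - \Phi(\load)\bigr) \;\leq\; \int_0^1 \Bigl\langle Z,\, \nabla \Phi(\load + s Z) \Bigr\rangle \, ds.
\]
The right-hand side is precisely $\int_0^1 \frac{d}{ds}\Phi(\load + sZ)\,ds = \Phi(\load + Z) - \Phi(\load)$ by the chain rule and the fundamental theorem of calculus, which is the desired upper bound.

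The only subtle point — and the main thing to verify carefully — is the coordinatewise monotonicity of $\nabla \Phi$, for which the separable form of $\Phi$ together with $\phi'(u) = \epsilon(1 + \tfrac{\epsilon}{p}u)^{p-1}$ being nondecreasing on $\reals_{\geq 0}$ is immediate; everything else is a clean two-line calculation. No real obstacle arises, and the argument does not depend on $x^{(t)}$ or on the cost matrices, so Lemma~\ref{lem:CaragiannisExp} is isolated from the online learning machinery and can be freely invoked afterwards.
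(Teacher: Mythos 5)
Your proof is correct and rests on the same two ingredients as the paper's: the fundamental-theorem-of-calculus representation $\Phi(\load+z^{(i)})-\Phi(\load)=\int_0^1\langle \nabla\Phi(\load+sz^{(i)}),z^{(i)}\rangle\,ds$ and the coordinatewise monotonicity of $\nabla\Phi$. The only difference is organizational: the paper shifts the base point of the $i$-th integral by $\alpha=\sum_{j<i}z^{(j)}$ and telescopes the resulting sum, whereas you majorize every integrand by the common bound $\nabla\Phi(\load+sZ)$ with $Z=\sum_{i'}z^{(i')}$ and collapse the sum into the single integral $\int_0^1\langle Z,\nabla\Phi(\load+sZ)\rangle\,ds$; both steps are valid and interchangeable.
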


\begin{proof} [Proof of Lemma~\ref{lem:CaragiannisExp}]
Since $\Phi$ is twice differentiable with monotone $\nabla \Phi$, for any $\alpha \in \reals^d_{\geq 0}$  we have
 \begin{align*} 
 \Phi(\load  +z^{(i)}) - \Phi( \load ) 
 &\textstyle= \int_{s=0}^1 \big\langle \nabla \Phi(\load + s \cdot z^{(i)})~,~ z^{(i)} \big\rangle ~ds   \\
 & \textstyle\leq  \int_{s=0}^1 \big\langle \nabla \Phi(\load + \alpha+ s \cdot z^{(i)})~,~ z^{(i)} \big\rangle ~ds   ~=~ \Phi(\load  + z^{(i)} + \alpha) - \Phi( \load + \alpha ) .
 \end{align*}
 Now taking $\alpha = \sum_{j< i} z^{(j)}$, we can upper bound 
\[\textstyle \sum_{i=1}^k \Big( \Phi(\load  +z^{(i)}) - \Phi( \load ) \Big) \leq  \sum_{i=1}^k   \Big(  \Phi(\load  + \sum_{j\leq i} z^{(j)}) - \Phi( \load + \sum_{j< i} z^{(j)} )  \Big),
\]
which equals $\Phi \Big(\load + \sum_{i=1}^k z^{(i)} \Big) - \Phi(\load )$ and proves the lemma.
\end{proof}

We now have the ingredients to prove Lemma~\ref{lemma:AdversarialLossOfOpt}.
\begin{proof}[Proof of Lemma~\ref{lemma:AdversarialLossOfOpt}]
Using Eq.~\eqref{eq:gradient} and convexity of $\Phi$, we have
\begin{align*} 
\sum_{t = 1}^{T} \langle C^{(t)} x^\ast , \nabla \Psi (\load^{(t-1)}) \rangle  &  = \sum_{t = 1}^{T} \left\langle C^{(t)} x^\ast , \frac{1}{ \eps \Phi^{1/q}(\load^{(t-1)})} \nabla \Phi(\load^{(t-1)}) \right\rangle \\
& \leq   \sum_{t = 1}^{T} \frac{1}{ \eps \Phi^{1/q}(\load^{(t-1)})} \left( \Phi(\load^{(t-1)} + C^{(t)} x^\ast) - \Phi(\load^{(t-1)}) \right).
\end{align*}
For $i\in [T]$,  define  
\begin{align}  \label{eq:defnAi} 
a_i := \frac{1}{ \eps \Phi^{1/q}(\load^{(i-1)})} - \frac{1}{ \eps \Phi^{1/q}(\load^{(i)})}  \quad \text{ and } \quad  a_T := \frac{1}{ \eps \Phi^{1/q}(\load^{(T)})}.
\end{align}
This implies $\frac{1}{ \eps \Phi^{1/q}(\load^{(t-1)})} = \sum_{i = t}^{T-1} \left( \frac{1}{ \eps \Phi^{1/q}(\load^{(i-1)})} - \frac{1}{ \eps \Phi^{1/q}(\load^{(i)})} \right) + \frac{1}{\eps \Phi^{1/q}(\load^{(T)})} = \sum_{i = t}^T a_i$. So, the above inequality can be rewritten as
\begin{align*}
{\textstyle  \sum_{t = 1}^{T} } \langle C^{(t)} x^\ast , \nabla \Psi (\load^{(t-1)}) \rangle & \leq {\textstyle  \sum_{t = 1}^{T} \sum_{i=t}^T} ~a_i \cdot \left( \Phi(\load^{(t-1)} + C^{(t)} x^\ast) - \Phi(\load^{(t-1)}) \right) \\
&= {\textstyle  \sum_{i = 1}^{T}  a_i \cdot \sum_{t=1}^i } \left( \Phi(\load^{(t-1)} + C^{(t)} x^\ast) - \Phi(\load^{(t-1)}) \right) \\
&\leq {\textstyle  \sum_{i = 1}^{T}  a_i \cdot \sum_{t=1}^i} \left( \Phi(\load^{(i-1)} + C^{(t)} x^\ast) - \Phi(\load^{(i-1)}) \right),
\end{align*}
where the last inequality uses monotonicity of $\nabla \Phi$  in every component. Now  applying Lemma~\ref{lem:CaragiannisExp},
\begin{align} \label{eq:perRound}
\textstyle \sum_{t = 1}^{T} \langle C^{(t)} x^\ast , \nabla \Psi (\load^{(t-1)}) \rangle  \leq \sum_{i = 1}^{T}  a_i \cdot  \left( \Phi \Big(\sum_{t = 1}^{i} C^{(t)} x^\ast + \load^{(i-1)} \Big) - \Phi(\load^{(i-1)})   \right). 
\end{align}
To simplify the right-hand side, we use the definition of $\Phi$ and Minkowski's inequality to get
\begin{align*}
\Phi^{1/p}\Big( {\textstyle  \sum_{t = 1}^{i}} C^{(t)} x^\ast + \load^{(i-1)}\Big) &= \Big\| \one + \frac{\eps}{p} \Big({\textstyle  \sum_{t = 1}^{i}} C^{(t)} x^\ast + \load^{(i-1)} \Big) \Big\|_p \\
&\leq \Big\| \one + \frac{\eps}{p}\load^{(i-1)}\Big\|_p + \Big\| \frac{\epsilon}{p} \cdot {\textstyle  \sum_{t = 1}^{i}} C^{(t)} x^\ast  \Big\|_p.
\end{align*}
To further simplify, note that
\[
\Big\| \frac{\epsilon}{p} \cdot \sum_{t = 1}^{i} C^{(t)} x^\ast  \Big\|_p ~~ \leq ~~  \frac{\epsilon}{p} \OPTOLVC ~~ \leq ~~  \frac{\epsilon}{p} \OPTOLVC \cdot \frac{\Phi^{1/p}(\load^{(i-1)})}{d^{1/p}},
\]
where the last step uses that $\Phi(\load^{(i-1)}) \geq d$. Since $\big\| \one + \frac{\eps}{p}\load^{(i-1)}\big\|_p = \Phi^{1/p}(\load^{(i-1)})$, this overall simplifies the above expression to
\begin{align*}
\Phi \Big(\sum_{t = 1}^{i} C^{(t)} x^\ast + \load^{(i-1)}\Big) & \leq \Big(\Phi(\load^{(i-1)})^{1/p} + \frac{\epsilon \OPTOLVC}{p d^{1/p}} \Phi(\load^{(i-1)})^{1/p} \Big)^p \\
& = \Phi(\load^{(i-1)}) \Big(1 + \frac{\epsilon \OPTOLVC}{p d^{1/p}} \Big)^p \quad \leq \quad \Phi(\load^{(i-1)}) \cdot \exp \Big(\frac{\epsilon \OPTOLVC}{d^{1/p}} \Big).
\end{align*}
So, in combination with Eq.~\eqref{eq:perRound}, we get
\begin{align*}
{\textstyle \sum_{t = 1}^{T} \langle C^{(t)} x^\ast , \nabla \Psi (\load^{(t-1)}) \rangle  ~~ \leq ~~  \left(  \exp \big(\frac{\epsilon \OPTOLVC}{d^{1/p}} \big) - 1  \right) \cdot \sum_{i = 1}^{T}  a_i \cdot  \Phi(\load^{(i-1)}) , }
\end{align*}
which  finishes the proof of Lemma~\ref{lemma:AdversarialLossOfOpt} by applying the following Claim~\ref{claim:AiPhiToPsi}.
\end{proof}

\begin{claim} \label{claim:AiPhiToPsi} For $a_i$'s defined in Eq.~\eqref{eq:defnAi}, we have
\[   \sum_{i = 1}^{T}  a_i \cdot  \Phi(\load^{(i-1)})   ~~\leq ~~ (1+\eps)  \Big( \sum_{t = 1}^T \langle C^{(t)} x^{(t)} , \nabla \Psi (\load^{(t-1)}) \rangle \Big) + \frac{d^{1/p}}{ \eps }.
\]
\end{claim}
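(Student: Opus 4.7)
The plan is to rearrange $\sum_{i=1}^{T} a_i \Phi(\load^{(i-1)})$ by summation by parts, identify a clean boundary term that produces the $d^{1/p}/\eps$ additive constant on the right-hand side, and bound each remaining increment using convexity of $\Phi$ together with the gradient stability of $\Psi$ (Fact~\ref{fact:PsiSmoothness}). For brevity write $\Phi_i := \Phi(\load^{(i)})$ and $\phi_i := 1/(\eps\, \Phi_i^{1/q})$, so that $a_i = \phi_{i-1} - \phi_i$ for $i \in [T-1]$ and $a_T = \phi_T$ following \eqref{eq:defnAi}.

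A routine summation by parts yields
\begin{align*}
\sum_{i=1}^{T} a_i \Phi_{i-1} = \phi_0 \Phi_0 + \sum_{i=1}^{T-1} \phi_i (\Phi_i - \Phi_{i-1}) + (\phi_T - \phi_{T-1}) \Phi_{T-1}.
\end{align*}
Since $\load^{(T)} \geq \load^{(T-1)}$ coordinatewise (because $C^{(T)} x^{(T)} \geq 0$) and $\Phi$ is monotone, the sequence $\phi_i$ is non-increasing, so the boundary correction $(\phi_T - \phi_{T-1}) \Phi_{T-1}$ is non-positive and may be dropped. The leading term evaluates cleanly: using $\load^{(0)} = \mathbf{0}$, we have $\Phi_0 = d$, and therefore $\phi_0 \Phi_0 = \Phi_0^{1 - 1/q}/\eps = d^{1/p}/\eps$, matching the additive constant in the claim.

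It remains to bound each $\phi_i (\Phi_i - \Phi_{i-1})$ for $1 \leq i \leq T-1$. By convexity of $\Phi$ (its graph lies above its tangent at $\load^{(i)}$), $\Phi_i - \Phi_{i-1} \leq \langle \nabla \Phi(\load^{(i)}), C^{(i)} x^{(i)} \rangle$. Chain-ruling the definition $\Psi(\load) = \tfrac{p}{\eps}\bigl(\Phi(\load)^{1/p} - 1\bigr)$ gives the identity $\nabla \Psi(\load) = \nabla \Phi(\load)/\bigl(\eps\, \Phi(\load)^{1/q}\bigr)$, so multiplying the convexity bound by $\phi_i$ converts it into $\phi_i (\Phi_i - \Phi_{i-1}) \leq \langle \nabla \Psi(\load^{(i)}), C^{(i)} x^{(i)} \rangle$. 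Finally, because $C^{(i)} x^{(i)} \in [0,1]^d$, Fact~\ref{fact:PsiSmoothness} gives $\nabla \Psi(\load^{(i)}) \leq (1 + \eps)\, \nabla \Psi(\load^{(i-1)})$ coordinatewise, and hence $\phi_i (\Phi_i - \Phi_{i-1}) \leq (1 + \eps) \langle \nabla \Psi(\load^{(i-1)}), C^{(i)} x^{(i)} \rangle$. Summing over $i \in [T-1]$ and adding the non-negative $t = T$ term on the right-hand side completes the proof.

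The main bookkeeping hurdle is verifying that the summation-by-parts correction has the correct (non-positive) sign; this hinges only on the monotonicity of the load sequence, which is immediate from $C^{(t)} x^{(t)} \geq 0$. A tempting alternative per-index bound of the form $a_i \Phi_{i-1} \leq (1 + \eps) \langle \nabla \Psi(\load^{(i-1)}), C^{(i)} x^{(i)} \rangle$ via a Taylor estimate on $x \mapsto x^{-1/q}$ works cleanly only when $p \geq 2$ (because then $x^{-1/q - 1}$ is monotonically decreasing so one can evaluate it at $\Phi_{i-1}$); the summation-by-parts route sidesteps this case distinction and handles all integer $p \geq 1$ uniformly, while also producing the required $d^{1/p}/\eps$ constant directly from the $\Phi(\mathbf{0}) = d$ initial condition.
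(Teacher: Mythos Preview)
Your proof is correct and follows essentially the same approach as the paper's: an Abel/summation-by-parts rearrangement, identification of $\phi_0\Phi_0 = d^{1/p}/\eps$ as the additive constant, a convexity bound on each increment of $\Phi$ to convert it into an inner product with $\nabla\Psi$, and finally the gradient-stability Fact~\ref{fact:PsiSmoothness} to shift the gradient back one step. The only cosmetic difference is indexing: the paper's rearrangement produces terms $\phi_{i-1}(\Phi_{i-1}-\Phi_{i-2})$ and drops $-\phi_T\Phi_{T-1}$, whereas you produce $\phi_i(\Phi_i-\Phi_{i-1})$ and drop $(\phi_T-\phi_{T-1})\Phi_{T-1}$; both boundary terms are non-positive by monotonicity of $\load^{(t)}$, and the remaining sums are identical after a shift. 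Your version has the minor virtue of being consistent with the stated definition $a_T=\phi_T$ in~\eqref{eq:defnAi}, whereas the paper's own computation implicitly treats $a_T=\phi_{T-1}-\phi_T$.
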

\begin{proof} We start by expanding the left-hand side to get
\begin{align*}
\textstyle \sum_{i = 1}^{T}  a_i \cdot  \Phi(\load^{(i-1)})  
& \textstyle = \sum_{i=1}^T \Big( \frac{1}{ \eps \Phi^{1/q}(\load^{(i-1)})} - \frac{1}{ \eps \Phi^{1/q}(\load^{(i)})}  \Big) \cdot  \Phi(\load^{(i-1)})\\
& \textstyle = \sum_{i=2}^T \Big( \frac{\Phi(\load^{(i-1)}) - \Phi(\load^{(i-2)})}{ \eps \Phi^{1/q}(\load^{(i-1)})}  \Big) + \frac{\Phi(\load^{(0)})}{ \eps \Phi^{1/q}(\load^{(0)})} - \frac{\Phi(\load^{(T-1)})}{ \eps \Phi^{1/q}(\load^{(T)})} \\
& \textstyle \leq  \sum_{i=2}^T \Big( \frac{\Phi(\load^{(i-2)} + C^{(i-1)}x^{(i-1)}) - \Phi(\load^{(i-2)})}{ \eps \Phi^{1/q}(\load^{(i-1)})}  \Big) + \frac{d^{1/p}}{ \eps } .
\end{align*}
Since $\Phi(\cdot)$ is a convex function, 
$ \Phi(\load^{(i-2)} + C^{(i-1)}x^{(i-1)}) - \Phi(\load^{(i-2)}) \leq   \langle \nabla \Phi(\load^{(i-1)}), C^{(i-1)}x^{(i-1)}) \rangle.
$
Moreover, recall from Eq.~\eqref{eq:gradient} that $\nabla \Psi^{(i-1)} = \frac{\nabla \Phi(\load^{(i-1)})}{\eps \Phi^{1/q}(\load^{(i-1)})}$. Thus, we get
\begin{align*} 
\textstyle \sum_{i = 1}^{T}  a_i \cdot  \Phi(\load^{(i-1)})  & \textstyle \leq  \sum_{i=2}^T   \langle \nabla \Psi^{(i-1)}, C^{(i-1)}x^{(i-1)} \rangle   + \frac{d^{1/p}}{ \eps } \\
&\textstyle \leq (1+\eps)  \sum_{i=2}^T   \langle \nabla \Psi^{(i-2)}, C^{(i-1)}x^{(i-1)} \rangle   + \frac{d^{1/p}}{ \eps } \\
& \textstyle  = (1+\eps)  \Big( \sum_{t = 1}^{T-1} \langle C^{(t)} x^{(t)} , \nabla \Psi (\load^{(t-1)}) \rangle \Big) + \frac{d^{1/p}}{ \eps },
\end{align*}
where the last inequality uses the stability of gradient of $\Psi$ (Fact~\ref{fact:PsiSmoothness}).
\end{proof}


\section{Bandits with Knapsacks}

In this section we consider \BwKp as defined in \S\ref{sec:introBwK}.  The goal is to maximize the total collected reward while ensuring that the $\ell_p$ norm of the total cost vector stays within the budget $B$. Recall, for $\ell_{\infty}$ norm this captures in the stochastic case  the setting of~\cite{BKS-JACM18} and in the adversarial case  the setting of~\cite{ISSS-FOCS19}. 
The high-level plan of our proofs is to use the ideas  from \S\ref{sec:loadBalancing} to reduce the $d$-dimensional budget constraint to a single-dimensional budget constraint, and then to Lagrangify this single budget constraint.

\subsection{Adversarial Bandits with Knapsacks} \label{sec:BwKAdversarial}

 In this section we present how to execute this plan in the adversarial case, where the proof is easier as we can afford losing a constant multiplicative factor in the approximation. In \S\ref{sec:BwKAdversarial} we discuss the stochastic case where we need more work and use a different norm.


To deal with the multi-dimensional budget constraint, we follow the approach described in \S\ref{sec:surrogateGame} and define $\Psi(\load) = \frac{p}{\epsilon} \big( \| \one + \frac{\epsilon \load}{p} \|_p - 1 \big)$. We again consider a surrogate one-dimensional online learning game with $n$ actions. 
This time, the algorithm's goal is to maximize the following suitably defined reward function. In time step $t$, the \emph{Lagrangian reward} of the $i$-{th} action/expert is set to
\[
\mathcal{R}^{(t)}_i = \begin{cases} r^{(t)} e_i - \lambda 	\cdot \langle C^{(t)} e_i, \nabla \Psi (\load^{(t-1)}) \rangle & \text{ if $\lVert \load^{(t-1)} \rVert_p \leq B$} \\
0 & \text{ otherwise,}
\end{cases}
\]
where $r^{(t)}$ is the reward vector from \BwKp problem,  $\load^{(t)} = \sum_{s = 1}^t C^{(s)} \cdot x^{(s)}$ is the load after time~$t$, and $\nabla \Psi(\load^{(t-1)})$ is the gradient of $\Psi$. We will use 
\[ 
\lambda :=  \frac{\sum_{t = 1}^{\tau^\ast} r^{(t)} \cdot x^\ast}{2B} =  \frac{\OPTBwK}{2 B} \qquad \text{ and } \qquad  \epsilon := \frac{2 p (\|\one\|_p - 1)}{B} , \]
where $\tau^\ast$ is the time at which $x^\ast$ runs out of budget, otherwise $\tau^\ast = T$. 
Note that  the rewards are always zero after the total cost/load vector exceeds budget $B$. Moreover, the algorithm can implement this game online since we are assuming the adversary is adaptive (non-oblivious).

\begin{claim} \label{claim:BwLAdvBoundedExpert}
The reward of each expert per time step is bounded  in magnitude by $(\lambda\cdot \| \one_d \|_p +1)$.
\end{claim}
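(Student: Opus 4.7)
The plan is to split the reward into its two contributions and bound each one separately. The first piece $r^{(t)} e_i$ is simply the $i$-th entry of the reward row vector $r^{(t)} \in [0,1]^{1\times n}$, so it lies in $[0,1]$. The second piece is $\lambda \cdot \langle C^{(t)} e_i, \nabla \Psi(\load^{(t-1)}) \rangle$, which is exactly $\lambda$ times the surrogate cost $c_i^{(t)}$ from Section 2. I would simply invoke Claim~\ref{claim:LipschitznessOfGame} to conclude that $0 \leq \langle C^{(t)} e_i, \nabla \Psi(\load^{(t-1)}) \rangle \leq \|\one\|_p$, so that $\lambda \cdot \langle C^{(t)} e_i, \nabla \Psi(\load^{(t-1)}) \rangle \in [0, \lambda \|\one_d\|_p]$.

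Combining these two bounds with the correct signs, $\mathcal{R}^{(t)}_i$ lies in the interval $[-\lambda \|\one_d\|_p,\, 1]$ when $\|\load^{(t-1)}\|_p \leq B$, and is $0$ otherwise. Either way, $|\mathcal{R}^{(t)}_i| \leq \max\{\lambda \|\one_d\|_p,\, 1\} \leq \lambda \|\one_d\|_p + 1$, which is the claimed bound.

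There is really no obstacle here: the claim is essentially a bookkeeping corollary of Claim~\ref{claim:LipschitznessOfGame} (whose proof is one line of H\"older, using $\|\nabla \Psi\|_q \leq 1$ from Fact~\ref{fact:PsiNorm} and $\|C^{(t)} e_i\|_p \leq \|\one\|_p$ from the entrywise bound $C^{(t)}_{j,i}\in[0,1]$). The only thing worth remarking on is that the bound is not symmetric in sign: the Lagrangian term is always nonpositive, so the magnitude is driven by $\lambda \|\one_d\|_p$ on the negative side and by the reward on the positive side; the $+1$ in the stated bound is precisely the slack to absorb the positive reward contribution. This bound will later be the per-step range needed when invoking a one-dimensional no-regret algorithm (Hedge/Exp3.P) on the surrogate game.
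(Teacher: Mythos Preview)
Your proposal is correct and essentially mirrors the paper's own proof: the paper also bounds the reward term by $1$ and the Lagrangian term by $\lambda\|\one\|_p$ via H\"older and Fact~\ref{fact:PsiNorm} (which is exactly the content of Claim~\ref{claim:LipschitznessOfGame} that you invoke), then combines them by the triangle inequality. The only cosmetic difference is that you pass through the sharper interval $[-\lambda\|\one_d\|_p,\,1]$ before relaxing to the sum, whereas the paper goes directly to $|\mathcal{R}^{(t)}_i|\le |r^{(t)} e_i|+\lambda|\langle C^{(t)} e_i,\nabla\Psi(\load^{(t-1)})\rangle|$.
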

\begin{proof}By H\"older's inequality, we have $\langle C^{(t)} e_i, \nabla \Psi(\load^{(t-1)}) \rangle \leq \|C^{(t)} e_i  \|_p \cdot \|\nabla \Psi(\load^{(t-1)})\|_q \leq \| \one \|_p$, where the last step uses $\|\nabla \Psi(\load^{(t-1)})\|_{q}  \leq 1$ by Fact~\ref{fact:PsiNorm} and  $\|C^{(t)} e_i  \|_{p} \leq  \| \one \|_p$.

Consequently $\lvert \mathcal{R}^{(t)} e_i \rvert \leq \lvert r^{(t)} e_i \rvert + \lvert \lambda \langle C^{(t)} e_i, \nabla \Psi(\load^{(t-1)}) \rangle \rvert \leq \lambda\cdot \| \one \|_p +1$.
\end{proof}

Let $\Regret_{t}$ denote the regret of a $1$-dimensional online learning algorithm against an adaptive adversary at time $t \in \{1,\ldots, T\}$.

Recall, $x^\ast$ is defined to be a benchmark solution satisfying $\| \sum_{t=1}^T C^{(t)} x^\ast \|_p \leq B$. Our point of comparison will be a scaled-down version $x' := \frac{1}{\alpha}  x^\ast$, where $\alpha := 5 p \frac{\| \one \|_p - 1}{\| \one \|_p}$ (note $\alpha \rightarrow 5\ln d$ as $p \rightarrow \infty$). 
The intuition for scaling  $x^\ast$ is that for adversarial \OLVC we only know from Theorem~\ref{prop:advLoad} that the budget is not exceeded multiplicatively by more than a factor of $p$. So to \emph{strictly} satisfy the budget constraints in \BwKp, we need to scale-down the solution. 

The following is our main result for adversarial \BwKp.

\begin{theorem} \label{prop:BwKAdversarial}
If $B \geq 2 p (\|\one\|_p-1)$ then for adversarial \BwKp the algorithm gets a reward
\[ \textstyle
\sum_{t = 1}^\tau r^{(t)} \cdot x^{(t)} ~~\geq~~ \frac{1}{20\min\{p, \ln d\}}\OPTBwK- O\Big(\frac{\OPTBwK \cdot \| \one \|_p }{B}\Big)\cdot \Regret.
\]
\end{theorem}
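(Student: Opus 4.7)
The approach is to run a standard $1$-dimensional no-regret algorithm (Hedge for full feedback, EXP3.P for bandit feedback) on the surrogate Lagrangian game with rewards $\mathcal{R}^{(t)}_i$. By Claim~\ref{claim:BwLAdvBoundedExpert}, the rewards have magnitude at most $\lambda\|\one\|_p + 1 = O(\OPTBwK\|\one\|_p/B)$, so when one rescales the surrogate game to the $[0,1]$-regret regime, the induced regret in the Lagrangian game is $O(\OPTBwK\|\one\|_p/B)\cdot\Regret$. I would compare the algorithm against the scaled benchmark $x'$: place weight $1/\alpha$ on $x^\ast$ and the remaining weight $1-1/\alpha$ on the null action, which is a legitimate fixed distribution in $\Delta_n$. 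Since $\mathcal{R}^{(t)} \equiv 0$ once the budget is exhausted, the regret inequality cleanly collapses to the active window $t\le\tau$, yielding
\[
\sum_{t\le\tau} r^{(t)} x^{(t)} ~\ge~ \sum_{t\le\tau} r^{(t)} x' + \lambda\!\!\sum_{t\le\tau}\!\!\langle C^{(t)}x^{(t)},\nabla\Psi^{(t-1)}\rangle - \lambda\!\!\sum_{t\le\tau}\!\!\langle C^{(t)}x',\nabla\Psi^{(t-1)}\rangle - O\!\Big(\tfrac{\OPTBwK\|\one\|_p}{B}\Big)\Regret.
\]

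The next step is to bound the benchmark surrogate cost (third term) via Lemma~\ref{lemma:AdversarialLossOfOpt} applied with $x'$. The choice $\alpha = 5p(\|\one\|_p-1)/\|\one\|_p$ guarantees $\|\sum_{t\le\tau^\ast}C^{(t)}x'\|_p \le B/\alpha$, and the choice $\epsilon = 2p(\|\one\|_p-1)/B$ makes $\epsilon\cdot (B/\alpha)/\|\one\|_p = 2/5$, so the multiplicative factor $\exp(\epsilon\OPT'/\|\one\|_p)-1 \le 1/2$. This gives the clean bound $\lambda\sum\langle C^{(t)}x',\nabla\Psi\rangle \le \tfrac{\lambda(1+\epsilon)}{2}\sum\langle C^{(t)}x^{(t)},\nabla\Psi\rangle + \tfrac{\lambda\|\one\|_p}{2\epsilon}$. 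I would then split on whether $\tau \ge \tau^\ast$ or $\tau < \tau^\ast$. In the first case, $\sum_{t\le\tau} r^{(t)}x' \ge \OPTBwK/\alpha$; using $\epsilon \le 1$ (from the hypothesis $B\ge 2p(\|\one\|_p-1)$), the coefficient $1-(1+\epsilon)/2$ of the algorithm's surrogate cost is non-negative and can be discarded, leaving $\sum r^{(t)}x^{(t)} \ge \OPTBwK/\alpha - 5\OPTBwK/(8\alpha) = 3\OPTBwK/(8\alpha) \ge \OPTBwK/(20\min\{p,\ln d\})$ since $\alpha \le 5\min\{p,\ln d\}$.

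The second case is the harder one, and is the main obstacle: when $\tau < \tau^\ast$, the algorithm may have zero benchmark-reward contribution, so $\sum_{t\le\tau} r^{(t)}x'$ cannot be used. The saving grace is that $\tau<\tau^\ast$ means the algorithm itself exhausted budget, so $\|\load^{(\tau)}\|_p > B$. By the telescoped form of Claim~\ref{claim:PsiSmoothness},
\[
(1+\epsilon)\sum_{t\le\tau}\langle C^{(t)}x^{(t)},\nabla\Psi^{(t-1)}\rangle ~\ge~ \Psi(\load^{(\tau)}) - \Psi^{(0)} ~>~ B - B/2 ~=~ B/2,
\]
since $\Psi^{(0)} = p(\|\one\|_p-1)/\epsilon = B/2$ by our calibration of $\epsilon$. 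Substituting this lower bound, dropping $\sum r^{(t)}x' \ge 0$, and using $\lambda = \OPTBwK/(2B)$ gives $\sum r^{(t)}x^{(t)} \ge \Omega(\OPTBwK/\min\{p,\ln d\}) - O(\OPTBwK\|\one\|_p/B)\Regret$. Reconciling the constants from both cases (Case~A gives $3/8$, Case~B gives a smaller constant) yields the stated bound $\OPTBwK/(20\min\{p,\ln d\})$. The delicate point is that the two quantitative desiderata---$\epsilon$ small enough that $\Psi^{(0)} \le B/2$ (so Case~B has wiggle room), and $\alpha$ calibrated so that Lemma~\ref{lemma:AdversarialLossOfOpt} contributes a constant $\le 1/2$---are simultaneously met precisely because of the hypothesis $B \ge 2p(\|\one\|_p-1)$.
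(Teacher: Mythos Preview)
Your two-case split and use of Lemma~\ref{lemma:AdversarialLossOfOpt} mirror the paper's argument, and Case~A (algorithm survives past $\tau^\ast$) is fine once truncated correctly. But Case~B has a genuine quantitative gap.

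In Case~B you keep the single inequality obtained by comparing to $x'$ and plugging in Lemma~\ref{lemma:AdversarialLossOfOpt}. After that substitution the coefficient in front of $\lambda\sum_{t\le\tau}\langle C^{(t)}x^{(t)},\nabla\Psi^{(t-1)}\rangle$ drops from $1$ to $(1-\epsilon)/2$, and you also pick up the subtracted term $\lambda\|\one\|_p/(2\epsilon)=5\,\OPTBwK/(8\alpha)$. Your lower bound $\sum\langle C^{(t)}x^{(t)},\nabla\Psi\rangle\ge B/(2(1+\epsilon))$ then yields at most
\[
\frac{\lambda(1-\epsilon)}{2}\cdot\frac{B}{2(1+\epsilon)}-\frac{5\,\OPTBwK}{8\alpha}
~=~\frac{\OPTBwK(1-\epsilon)}{8(1+\epsilon)}-\frac{5\,\OPTBwK}{8\alpha}.
\]
For small $p$ this is \emph{not} $\Omega(\OPTBwK/\min\{p,\ln d\})$: e.g.\ for $p=1$ one has $\alpha=5(d-1)/d\approx5$, so the expression is $\le\OPTBwK/8-\OPTBwK/8=0$, whereas the theorem demands $\ge\OPTBwK/20$. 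The paper avoids this by comparing to the \emph{null} action (not $x'$) in the budget-exhausted case: since the null action has Lagrangian reward $0$, the regret bound gives directly $\sum_{t\le\tau} r^{(t)}x^{(t)}\ge\lambda\sum_{t\le\tau}\langle C^{(t)}x^{(t)},\nabla\Psi^{(t-1)}\rangle-O(\lambda\|\one\|_p)\Regret$, with the full coefficient $1$ and no $\|\one\|_p/(2\epsilon)$ subtraction; this yields $\OPTBwK/(4(1+\epsilon))$.

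A second, smaller issue: your master inequality runs over $t\le\tau$, but the hypothesis you feed into Lemma~\ref{lemma:AdversarialLossOfOpt} is $\|\sum_{t\le\tau^\ast}C^{(t)}x'\|_p\le B/\alpha$, which bounds the sum only up to $\tau^\ast$. In Case~A ($\tau\ge\tau^\ast$) there is no control on $\|\sum_{t\le\tau}C^{(t)}x'\|_p$. The paper handles this by invoking the regret bound at horizon $\tau^\ast$ in Case~2 and then using $\sum_{t\le\tau}r^{(t)}x^{(t)}\ge\sum_{t\le\tau^\ast}r^{(t)}x^{(t)}$; you should do the same.
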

\begin{proof}
We distinguish the analysis in two cases: Either the algorithm  stops before  time $\tau^\ast$ (recall, this is time at which  $x^\ast$ runs out of budget, otherwise $\tau^\ast = T$) or it stays within budget until $\tau^\ast$.

\paragraph{Case 1: Algorithm stops before $\tau^\ast$, i.e., $\lVert \load^{(\tau^\ast)} \rVert_p > B$.}

In this case, there is some time $\tau \leq \tau^\ast$ at which $\lVert \load^{(\tau)} \rVert_p > B$ for the first time. This means that $\tau$ is the last round before the algorithm stops and $\mathcal{R}^{(t)} = 0$ for all $t > \tau$. Since Claim~\ref{claim:BwLAdvBoundedExpert} bounds the loss of each action/expert, we get $O\big(\lambda\cdot \| \one_d \|_p \big) \cdot\Regret_{\tau^\ast}$ bounds the Lagrangian reward of the algorithm compared to always playing the null action, which has Lagrangian reward $0$. This gives 
\[ \textstyle
\sum_{t = 1}^{\tau} \mathcal{R}^{(t)} \cdot x^{(t)} ~~=~~ \sum_{t = 1}^{\tau^\ast} \mathcal{R}^{(t)} \cdot x^{(t)} ~~\geq~~ 0 - O\big(\lambda\cdot \| \one_d \|_p \big)\cdot  \Regret_{\tau^\ast}.
\]
Using the definition of $\mathcal{R}^{(t)}$, this implies
\begin{align*} \textstyle
\sum_{t = 1}^{\tau} r^{(t)} \cdot x^{(t)} ~~\geq~~ \lambda \sum_{t = 1}^{\tau} \langle C^{(t)} x^{(t)} , \nabla \Psi(\load^{(t-1)}) \rangle - O\big(\lambda\cdot \| \one_d \|_p \big)\cdot \Regret_{\tau^\ast}
\end{align*}
We have $\Psi(\load^{(\tau)}) - \Psi(0) \geq \| \load^{(\tau)} \|_p - \frac{p}{\epsilon} \cdot \big(\|\one\|_p -1 \big) = \frac{B}{2}$. So, by Claim~\ref{claim:PsiSmoothness},
\[ \textstyle
\sum_{t = 1}^{\tau} \langle C^{(t)} x^{(t)} , \nabla \Psi(\load^{(t-1)}) \rangle ~~\geq~~ \frac{1}{1 + \epsilon} (\Psi(\load^{(\tau)}) - \Psi(0)) ~~\geq~~ \frac{B}{2 (1+\epsilon)}.
\]
This implies
\begin{align*} \textstyle
\sum_{t = 1}^{\tau} r^{(t)} \cdot x^{(t)} ~ \geq~  \frac{\lambda \cdot B}{2(1 + \epsilon)} - O\big(\lambda\cdot \| \one_d \|_p \big)\cdot \Regret_{\tau^\ast} ~= ~ \frac{\OPTBwK}{4 (1 + \epsilon)} - O\big(\lambda\cdot \| \one_d \|_p \big)\cdot \Regret_{\tau^\ast}.
\end{align*}

\paragraph{Case 2: Algorithm stays within budget till $\tau^\ast$, i.e., $\lVert \load^{(\tau^\ast)} \rVert_p \leq B$.}

In this case, $\lVert \load^{(t)} \rVert_p \leq B$ for all relevant $t$. This time, we use that $\Regret_{\tau^\ast}$ bounds the Lagrangian reward compared to always playing $x'$. That is,
$
\sum_{t = 1}^{\tau^\ast} \mathcal{R}^{(t)} \cdot x^{(t)} \geq \sum_{t = 1}^{\tau^\ast} \mathcal{R}^{(t)} \cdot x' - \Regret_{\tau^\ast},
$
and therefore by the definition of $\mathcal{R}^{(t)}$,
\begin{align} \label{eq:BwKAdvers}
\sum_{t = 1}^{\tau^\ast} r^{(t)} \cdot x^{(t)} \geq \sum_{t = 1}^{\tau^\ast} \left( r^{(t)} \cdot x' - \lambda \cdot \langle C^{(t)} x' , \nabla \Psi(\load^{(t-1)}) \rangle + \lambda \cdot \langle C^{(t)} x^{(t)} , \nabla \Psi(\load^{(t-1)}) \rangle \right) \notag\\
\textstyle - ~O\big(\lambda\cdot \| \one_d \|_p \big)\cdot \Regret_{\tau^\ast}.
\end{align}
Now apply Lemma~\ref{lemma:AdversarialLossOfOpt} to $x'$, noting that $\lVert \sum_{t = 1}^{\tau^\ast} C^{(t)} x' \rVert_p \leq \frac{B}{\alpha}$. So,
\begin{align*} 
\sum_{t = 1}^{\tau^\ast} \langle C^{(t)} x' , \nabla \Psi(\load^{(t-1)}) \rangle 
& \leq  \Big( \exp\big(\frac{\eps}{\| \one \|_p } \frac{B}{\alpha} \big) -1 \Big)  \Big( (1+\eps)  \sum_{t = 1}^{\tau^\ast} \langle C^{(t)}  x^{(t)} , \nabla \Psi(\load^{(t-1)}) \rangle  + \frac{\| \one \|_p}{ \eps } \Big) \\
& = \big( e^{2/5} -1 \big)  \Big( (1+\eps)  \sum_{t = 1}^{\tau^\ast} \langle C^{(t)}  x^{(t)} , \nabla \Psi(\load^{(t-1)}) \rangle  + \frac{\| \one \|_p}{ \eps } \Big) \\
& \leq  \sum_{t = 1}^{\tau^\ast} \langle C^{(t)}  x^{(t)} , \nabla \Psi(\load^{(t-1)}) \rangle  + \frac{\| \one \|_p}{ 2 \eps } ,
\end{align*}
where the last step uses that $\epsilon \leq 1$. We can rewrite this inequality as
\begin{align*} 
- \lambda \sum_{t = 1}^{\tau^\ast} \langle C^{(t)} x' , \nabla \Psi(\load^{(t-1)}) \rangle &  + \lambda \sum_{t = 1}^{\tau^\ast} \langle C^{(t)} x^{(t)} , \nabla \Psi(\load^{(t-1)}) \rangle  ~~\geq~~ - \lambda \frac{\| \one \|_p}{2 \eps} \\
& \qquad\qquad =   -\lambda \frac{B}{4 p} \frac{\| \one \|_p}{\| \one \|_p - 1}~ \geq -  \frac{1}{8 p} \frac{\| \one \|_p}{\| \one \|_p - 1} \sum_{t = 1}^{\tau^\ast} r^{(t)} \cdot x^\ast
\end{align*}
by our choice of $\lambda = \frac{\sum_{t = 1}^{\tau^\ast} r^{(t)} \cdot x^\ast}{2B} $ and $\epsilon = \frac{2 p (\|\one\|_p - 1)}{B}$.
Substituting this in Eq.~\eqref{eq:BwKAdvers} gives
\begin{align*}
 \sum_{t = 1}^{\tau^\ast} r^{(t)} \cdot x^{(t)} &  \geq  \sum_{t = 1}^{\tau^\ast} \frac{1}{\alpha} \langle  r^{(t)}, x^\ast \rangle  - \frac{1}{8 p} \frac{\| \one \|_p}{\| \one \|_p - 1} \sum_{t = 1}^{\tau^\ast} r^{(t)} \cdot x^\ast - O\big(\lambda\cdot \| \one_d \|_p \big)\cdot \Regret_{\tau^\ast} \\
&  \geq \frac{1}{20 \min\{p, \ln d\}}\sum_{t = 1}^{\tau^\ast} r^{(t)} \cdot x^\ast  - O\big(\lambda\cdot \| \one_d \|_p \big)\cdot \Regret_{\tau^\ast}
\end{align*}
because $\alpha = 5p \frac{\| \one \|_p - 1}{\| \one \|_p}$, which completes the proof.
\end{proof}

\subsection{Stochastic Bandits with Knapsacks} \label{sec:BwKAdversarial}
In this section we consider the case where the input is sampled i.i.d. from an unknown distribution. We want to still Lagrangify the budget constraint with a parameter $\lambda$, but there is a new challenge that we cannot afford losing a multiplicative factor of $2$ since we want a $1+o(1)$ approximation.  The analysis for adversarial arrivals in \S\ref{sec:BwKAdversarial} considers two cases, and to balance between them we set $\lambda = \frac{\OPTBwK}{2B}$, but  any such analysis can  at best only give us $\OPTBwK/2$ reward.

For simplicity, let's first address the above challenge in the $\ell_\infty$ case. Here the idea, which first appeared in~\cite{BKS-JACM18}, is to introduce a dummy-resource of ``time'' by adding another dimension.  Every action (including the null action) incurs a loss of $\frac{B}{T}$ in each time step in this new dimension, say dimension $0$. Note that since we are working with the $\ell_\infty$ norm, adding this new dimension does not change $\OPTBwK$ since  playing $x^\ast$ is still feasible. The benefit of introducing this dummy resource is that now our algorithm is guaranteed to exhaust its budget $B$ by time $T$. At the same time, we can assume that $x^\ast$ does not exhaust its budget before time $T$ because we can scale it down. This implies we are always in Case~1 of the adversarial analysis, and can therefore set $\lambda \approx\frac{\OPTBwK}{B}$.

Although the idea of adding a dummy resource works in the $\ell_\infty$ case, it's not clear how to implement it for general $\ell_p$ norms. This is because the dummy resource will contribute to the overall $\ell_p$ load,  unlike in the $\ell_\infty$ case, which means playing $x^\ast$ is no longer feasible and the structure of the problem significantly changes. We still manage to obtain the following sublinear regret.

\begin{theorem} \label{prop:BwKStochastic}
For stochastic \BwKp, there exists an algorithm that given $\OPTBwK$ gets a reward
\[ 
\E \left[\sum_{t = 1}^\tau r^{(t)} \cdot x^{(t)} \right] \geq \OPTBwK ~ -~ \OPTBwK \left(  \left( \frac{ \| \one_d \|_p }{B} \right)^{1/3} + \left(\frac{ p\cdot \| \one_d \|_p }{B} \right)^{1/2}+ \frac{ \| \one_d \|_p }{B}  \Regret \right),
\]
where $\Regret$ is the regret of one-dimensional online learning.
\end{theorem}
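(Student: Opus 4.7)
The plan is to reuse the Lagrangian surrogate game from Section~\ref{sec:BwKAdversarial}. Define per-step Lagrangian surrogate rewards $\mathcal{R}^{(t)}_i = r^{(t)} e_i - \lambda \cdot \langle C^{(t)} e_i, \nabla \Psi(\load^{(t-1)}) \rangle$, set to zero once the algorithm is out of budget, and run a one-dimensional no-regret learner on the resulting sequence. Since $\OPTBwK$ is given, I would fix $\lambda = \OPTBwK/B$ directly (no dual learning), which makes the expected per-step Lagrangian value of $x^\ast$ exactly zero by Claim~\ref{claim:StocOPTBound}. The comparator will be the shrunk benchmark $x' = (1-\delta) x^\ast$ for a parameter $\delta > 0$ to be tuned; by the i.i.d.\ assumption combined with Claim~\ref{claim:StocOPTBound}, it satisfies the key conditional per-step bound $\E[\langle C^{(t)} x', \nabla \Psi(\load^{(t-1)}) \rangle \mid \load^{(t-1)}] \leq (1-\delta) B/T$.

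Let $\tau \leq T$ denote the algorithm's stopping time and $S_A, S_{x'}, R_A, R_{x'}$ the cumulative surrogate costs and rewards of the algorithm and comparator through $\tau$. Applying the no-regret guarantee (whose surrogate rewards have range $O(\lambda \|\one\|_p + 1)$ by Claim~\ref{claim:BwLAdvBoundedExpert}) and taking expectations yields
\[
\E[R_A] \;\geq\; \E[R_{x'}] + \lambda \cdot \E[S_A - S_{x'}] - O(\lambda \|\one\|_p) \cdot \Regret,
\]
with $\E[R_{x'}] = (1-\delta)(\E[\tau]/T)\, \OPTBwK$ and, by optional stopping, $\E[S_{x'}] \leq (1-\delta) B \cdot \E[\tau]/T$. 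A case analysis on $\tau$ lower-bounds $\E[S_A]$: if $\tau < T$ then $\Psi(\load^{(\tau)}) \geq B$ by Fact~\ref{fact:PsiToLoad}, and Claim~\ref{claim:PsiSmoothness} gives $S_A \geq (B-\beta)/(1+\epsilon)$ with $\beta := p(\|\one\|_p-1)/\epsilon$; otherwise $\tau = T$ and $S_A \geq 0$ trivially. Plugging $\lambda = \OPTBwK/B$ cancels the leading $\OPTBwK$ term and leaves residual losses of orders $\delta \OPTBwK$, $\epsilon \OPTBwK$, $\lambda \beta = \OPTBwK \cdot p(\|\one\|_p-1)/(B\epsilon)$, the regret contribution $O(\OPTBwK \|\one\|_p/B) \cdot \Regret$, plus a foregone-reward residual of order $\OPTBwK \cdot (1 - \E[\tau]/T)$ from time steps lost to premature stopping.

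The main obstacle is controlling this last residual, i.e., showing that $\Pr[\tau < T]$ is small, or equivalently that $\|\load^{(T)}\|_p$ concentrates below $B$. I would first use the no-regret inequality in the reverse direction to show $\E[\Psi(\load^{(T)})] \leq (1+\epsilon)(1-\delta) B + O(\beta + \|\one\|_p \Regret)$, so that the algorithm's expected load lies below $B$ with a slack of order $\delta B$. A concentration argument for $\|\load^{(T)}\|_p$ around its mean—either a martingale bound on the scalar process $\Psi(\load^{(t)})$ whose increments are $O(\|\one\|_p)$ by Claims~\ref{claim:LipschitznessOfGame} and~\ref{claim:PsiSmoothness}, or a Chebyshev/variance estimate on the coordinates of $\load^{(T)}$—would then give a small tail $\Pr[\tau < T]$. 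Balancing the resulting error terms dictates $\epsilon \sim (p\|\one\|_p/B)^{1/2}$ (equating $\epsilon \OPTBwK$ with $\lambda \beta$) and $\delta \sim (\|\one\|_p/B)^{1/3}$ (balancing the scaling loss $\delta \OPTBwK$ against the concentration-failure term times $\OPTBwK$); summing the three balanced errors together with the regret term recovers the stated bound.
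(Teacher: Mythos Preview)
Your Lagrangian setup and the optional-stopping bounds on $\E[R_{x'}]$ and $\E[S_{x'}]$ are fine, but the accounting in the ``cancellation'' step is wrong, and the direction you want for $\Pr[\tau<T]$ is reversed. Write your inequality unconditionally:
\[
\E[R_A] \;\ge\; \underbrace{\bigl(\E[R_{x'}]-\lambda\,\E[S_{x'}]\bigr)}_{\ge\,0}\;+\;\lambda\,\E[S_A]\;-\;O(\lambda\|\one\|_p)\cdot\Regret.
\]
The bracket is $\ge 0$ by your own per-step bound, so the only possible source of the leading $\OPTBwK$ is $\lambda\,\E[S_A]$. Your case split gives $\lambda\,\E[S_A]\ge \OPTBwK\cdot\Pr[\tau<T]\cdot(1-\beta/B)/(1+\epsilon)$, which is near $\OPTBwK$ only when the algorithm \emph{does} exhaust its budget. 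On the event $\tau=T$ you have no nontrivial lower bound on $S_A$, and the whole inequality collapses to $\E[R_A]\ge -O(\text{regret})$. This is exactly the obstruction the paper flags just before the theorem (``any such analysis can at best only give us $\OPTBwK/2$''). Your concentration plan tries to make $\Pr[\tau<T]$ small, which drives you into the vacuous case; and the ``reverse-direction'' bound $\E[\Psi(\load^{(T)})]\le(1+\epsilon)(1-\delta)B+\ldots$ does not follow for the Lagrangian learner, since the regret inequality controls $R_A-\lambda S_A$, not $S_A$ by itself.

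The paper's fix is a structural device you are missing rather than a concentration argument. It appends a dummy ``time'' resource in a new coordinate $0$ (every action, including null, consumes $B/T$ per step), so the budget is \emph{always} exhausted by time $T$ and hence $S_A\ge(1-\epsilon)(B-\Psi^{(0)})$ deterministically. To add this coordinate without distorting the $\ell_p$ constraint on the original $d$ resources, it replaces $\|\cdot\|_p$ by a mixed norm $\|\cdot\|_{p,r}$ with its own smoothed potential $\Psi_{p,r}$ (Facts~\ref{fact:PsiNewToLoad}--\ref{fact:PsiNewNorm}). Comparing directly to $x^\ast$ (no shrinking) then gives Proposition~\ref{prop:BwKStochasticProp} with an extra $(2^{1/r}-1)$ loss from the norm mismatch; choosing $r=\Theta\bigl((B/\|\one_d\|_p)^{1/3}\bigr)$ is what produces the $(\|\one_d\|_p/B)^{1/3}$ term in the theorem, while $\epsilon=\sqrt{(p+r)\|\one_d\|_p/B}$ yields the square-root term.
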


For the important special case of $\ell_\infty$ norm, we approximate $\ell_\infty$ by choosing $p= \frac{\ln d}{\gamma}$. This gives a bound of $\E \left[\sum_{t = 1}^\tau r^{(t)} \cdot x^{(t)} \right] \geq \OPTBwK (1-4\gamma)$ whenever $B \geq \max\{ \frac{\ln d  \cdot\| \one_d\|_p }{\gamma^{3}},   \frac{\| \one_d \|_p \cdot  \Regret}{\gamma}\}$.

To prove Theorem~\ref{prop:BwKStochastic}  (it will follow from Proposition~\ref{prop:BwKStochasticProp}), we have to define a different norm.

\paragraph{A Different Norm.} To overcome the above challenge, we work with a different   $\|\cdot\|_{p,r}$ norm. For any $(d+1)$-dimensional vector $\mathbf{x}= (x_0,\ldots, x_d)$, we define
\[ \| \mathbf{x}\|_{p,r} := \Big\| \Big(x_0 ~,~ \|(x_1,\ldots,x_d) \|_p \Big) \Big\|_r.
\]
The benefit of this norm is that for $r=\infty$ it allows us to introduce a dummy resource in dimension $0$ without affecting  the optimum strategy $x^\ast$. 

Although powerful, the $\|\cdot\|_{p,r}$ norm introduces new challenges: we need  an approximation $\Psi_{p,r}(\cdot)$ whose gradient is stable, akin to \S\ref{sec:surrogateGame}.  Fortunately, this is possible by defining 
\[	\Psi_{p,r}(\mathbf{x}) :=   \frac{1}{\delta} \left( \Big( 1 +  \delta x_0 \Big)^r  + \Big( \big\|\one_d + \delta\mathbf{y}\big\|_p \Big)^r  \right)^{1/r} - \frac{1}{\delta},
\]
where $\delta = \frac{\eps}{p+r}$. We will need the following properties of $\Psi_{p,r}(\cdot)$.

\begin{fact} \label{fact:PsiNewToLoad} (Additive Approximation)
For any integer $p,r\geq 1$ and load $\load \in \reals^d_{\geq 0}$, we have $\| \load  \|_{p,r} \leq  \Psi_{p,r}(\load)  \leq  \| \load \|_{p,r} + \frac{p+r}{\epsilon} \cdot  \|\one_d\|_p$.
\end{fact}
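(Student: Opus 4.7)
My plan is to mirror the proof of Fact~\ref{fact:PsiToLoad}, now with the composite norm $\|\cdot\|_{p,r}$ (a genuine norm on $\reals^{d+1}$, being a composition of norms) playing the role of $\|\cdot\|_p$. Writing $\load = (x_0, \mathbf{y})$ and $\one_{d+1} = (1, \one_d)$, direct inspection of the definitions gives the clean rewriting $\Psi_{p,r}(\load) = \tfrac{1}{\delta} \|\one_{d+1} + \delta \load\|_{p,r} - \tfrac{1}{\delta}$, exactly analogous to the identity $\Psi(\load) = \|\tfrac{p}{\epsilon}\one + \load\|_p - \tfrac{p}{\epsilon}$ underlying Fact~\ref{fact:PsiToLoad}.

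For the upper bound, I would apply Minkowski twice. The triangle inequality for $\|\cdot\|_{p,r}$ gives $\|\one_{d+1} + \delta \load\|_{p,r} \le \|\one_{d+1}\|_{p,r} + \delta\|\load\|_{p,r}$, and a second triangle inequality at the outer $\ell_r$ level yields $\|\one_{d+1}\|_{p,r} = \|(1, \|\one_d\|_p)\|_r \le 1 + \|\one_d\|_p$. Substituting and using $\delta = \epsilon/(p+r)$ then produces the stated upper bound $\Psi_{p,r}(\load) \le \|\load\|_{p,r} + \tfrac{p+r}{\epsilon}\|\one_d\|_p$.

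The lower bound reduces to showing $\|\one_{d+1} + \delta \load\|_{p,r} \ge 1 + \delta \|\load\|_{p,r}$. The key technical ingredient is the sub-claim: \emph{for any integers $q, m \ge 1$ and any nonnegative $b \in \reals^m$, we have $\|\one_m + b\|_q \ge 1 + \|b\|_q$}. To prove it, I would set $f(t) := \|t \one_m + b\|_q = (\sum_i (t + b_i)^q)^{1/q}$ and compute $f'(t) = \bigl(\|t \one_m + b\|_{q-1} / \|t \one_m + b\|_q\bigr)^{q-1} \ge 1$, where the inequality is the classical monotonicity of $\ell_q$-norms in $q$ on nonnegative vectors (the case $q = 1$ is handled separately, where $f(t) = m t + \|b\|_1$ trivially has $f' \ge 1$). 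Integrating $f' \ge 1$ from $0$ to $1$ yields the sub-claim.

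Granted the sub-claim, I would apply it at the inner $\ell_p$ level to get $\|\one_d + \delta \mathbf{y}\|_p \ge 1 + \delta \|\mathbf{y}\|_p$. Coordinatewise monotonicity of the outer $\ell_r$-norm then implies $\|(1 + \delta x_0, \|\one_d + \delta \mathbf{y}\|_p)\|_r \ge \|(1 + \delta x_0, 1 + \delta \|\mathbf{y}\|_p)\|_r$, and a second application of the sub-claim at the outer $\ell_r$ level (with $m = 2$ and $b = (\delta x_0, \delta \|\mathbf{y}\|_p)$) gives this is $\ge 1 + \|(\delta x_0, \delta\|\mathbf{y}\|_p)\|_r = 1 + \delta \|\load\|_{p,r}$, completing the lower bound. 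The main obstacle is establishing the sub-claim cleanly; once it is in hand, everything else is a routine nested Minkowski calculation.
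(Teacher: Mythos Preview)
Your proposal is correct. The paper does not actually prove Fact~\ref{fact:PsiNewToLoad}; it is stated by analogy with Fact~\ref{fact:PsiToLoad}, which the paper says ``follows from the triangle inequality.'' Your compact rewriting $\Psi_{p,r}(\load)=\frac{1}{\delta}\|\one_{d+1}+\delta\load\|_{p,r}-\frac{1}{\delta}$ is exactly the right starting point, and your upper bound via two Minkowski applications is precisely the intended argument.

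One remark on the lower bound: the triangle inequality by itself only gives the weaker $\Psi_{p,r}(\load)\ge\|\load\|_{p,r}-\tfrac{1}{\delta}$ (apply reverse triangle, or just monotonicity, to $\|\one_{d+1}+\delta\load\|_{p,r}$). Your derivative sub-claim $\|\one_m+b\|_q\ge 1+\|b\|_q$ for nonnegative $b$ is what is actually needed to get the sharp lower bound $\Psi_{p,r}(\load)\ge\|\load\|_{p,r}$ as stated, and your proof of it (via $f'(t)=(\|t\one_m+b\|_{q-1}/\|t\one_m+b\|_q)^{q-1}\ge 1$ and integrating) is clean and correct. So your argument in fact supplies more detail than the paper's one-line justification of the analogous Fact~\ref{fact:PsiToLoad}.
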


\begin{restatable}{fact}{PsiNewSmoothness}\label{fact:PsiNewSmoothness} (Gradient Stability)
For any integers $p, r \geq 1$, load $\load \in \reals^{d+1}_{\geq 0}$, and load increase $z \in \reals^{d+1}_{\geq 0}$ with $0 \leq z_j \leq 1$ for all $j$, coordinate-wise $\nabla_{p,r} \Psi(\load + z) \leq \big(1 + O(\epsilon)\big) \cdot \nabla \Psi_{p,r}(\load)$.
\end{restatable}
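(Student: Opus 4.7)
The plan mirrors the proof of Fact~\ref{fact:PsiSmoothness}, but now with the two nested norms. Let $A(\mathbf{x}) := 1 + \delta x_0$ and $B(\mathbf{y}) := \|\one_d + \delta \mathbf{y}\|_p$, so that $\Psi_{p,r}(\mathbf{x}) + \tfrac{1}{\delta} = \tfrac{1}{\delta}(A^r + B^r)^{1/r}$. Direct differentiation yields
\[
\frac{\partial \Psi_{p,r}}{\partial x_0} \;=\; A^{r-1}(A^r+B^r)^{\tfrac{1}{r}-1}, \qquad \frac{\partial \Psi_{p,r}}{\partial x_j} \;=\; (1 + \delta y_j)^{p-1}\, B^{r-p}\,(A^r+B^r)^{\tfrac{1}{r}-1} \;\; (j \geq 1),
\]
so it suffices to bound how each of these factors changes after adding the load $z$.

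The first step is to show that $A$ and $B$ each increase by at most a factor $1 + \delta$. Since $A \geq 1$ and $z_0 \leq 1$, we have $A' = A + \delta z_0 \leq A(1+\delta)$. By Minkowski's inequality and $0 \leq z_j \leq 1$, $B' \leq B + \delta \|\mathbf{z}_y\|_p \leq B + \delta \|\one_d\|_p$, and since $B \geq \|\one_d\|_p$ (because $\mathbf{y}\geq \bo$), we get $B' \leq B(1 + \delta)$. In addition, $A' \geq A$ and $B' \geq B$, so the common factor $(A^r + B^r)^{1/r - 1}$ can only \emph{decrease}, contributing a ratio $\leq 1$ in every coordinate.

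Next, I would control the remaining factors using the choice $\delta = \epsilon/(p+r)$, which guarantees $\delta r, \delta p, \delta |r-p| \leq \epsilon$. For the $x_0$ coordinate, the only growing factor is $(A'/A)^{r-1} \leq (1+\delta)^{r-1} \leq e^{\delta r} \leq 1 + O(\epsilon)$. For $j \geq 1$, the factor $\bigl((1 + \delta y_j')/(1 + \delta y_j)\bigr)^{p-1}$ is at most $(1+\delta)^{p-1} \leq 1+O(\epsilon)$, using $y_j' \leq y_j + 1$ and $1 + \delta y_j \geq 1$ together with $\delta p \leq \epsilon$.

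The one point that requires a small case split, and which I expect to be the only real subtlety, is the factor $B'^{r-p}/B^{r-p}$, since the exponent $r-p$ can be of either sign. When $r \geq p$, I would apply $B'/B \leq 1+\delta$ directly, giving $(1+\delta)^{r-p} \leq e^{\delta(r-p)} \leq 1 + O(\epsilon)$. When $r < p$, the exponent is negative and $B' \geq B$ yields $B'^{r-p} / B^{r-p} \leq 1$. In both cases, multiplying the three factors produces the desired $1 + O(\epsilon)$ coordinate-wise bound, completing the proof.
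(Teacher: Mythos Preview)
Your proposal is correct and follows essentially the same approach as the paper: both compute the gradient as a product of the factors $(1+\delta x_0)^{r-1}$ (resp.\ $(1+\delta y_j)^{p-1}$), the ``inner'' term $B^{r-p}$, and the ``outer'' term $(A^r+B^r)^{1/r-1}$, observe that the outer term only decreases, bound the first two factors by $(1+\delta)^{r-1}$ and $(1+\delta)^{p-1}$, and handle $B^{r-p}$ via the same case split on $\mathrm{sgn}(r-p)$. Your notation $A,B$ corresponds to the paper's $\Phi_{p,r}=A^r+B^r$ and $\Phi_p=B^p$, and your bound $B'\le (1+\delta)B$ via Minkowski together with $B\ge\|\one_d\|_p$ is slightly more explicit than the paper's version, but the argument is the same.
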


\begin{proof}[Proof of Fact~\ref{fact:PsiNewSmoothness}] 
Denote $\mathbf{y} := (x_1, \ldots, x_d)$.  Recall, $\delta:= \frac{\eps}{p+r}$ and
\begin{align*}
\Psi_{p,r}(x_0,\mathbf{y}) := \frac{1}{\delta} \left( \Big( 1 +  \delta x_0 \Big)^r  + \Big( \big\|\one_d + \delta\mathbf{y}\big\|_p \Big)^r  \right)^{1/r} - \frac{1}{\delta}.
\end{align*}
For ease of notation, denote 
\[\Phi_{p,r}(x_0,\mathbf{y}) := \Big( 1 + \delta x_0 \Big)^r  + \Big( \big\|\one_d + \delta \mathbf{y}\big\|_p \Big)^r \quad \text{and} \quad \Phi_{p}(\mathbf{y}) := \Big(\big\|\one_d + \delta \mathbf{y}\big\|_p\Big)^p.
\]
Now we can write the gradient of $\Psi$ as
\[ \frac{\partial \Psi_{p,r}(x_0,\mathbf{y}) }{\partial x_0} = \frac{1}{\Phi_{p,r}^{1-1/r}} \Big( 1 + \delta  x_0 \Big)^{r-1} 
\quad \text{and} \quad \frac{\partial \Psi_{p,r}(x_0,\mathbf{y}) }{\partial y_i} =\frac{\Phi_{p}^{r/p -1 }}{\Phi_{p,r}^{1-1/r}} \Big( 1 +  \delta y_i \Big)^{p-1} . 
\]

We note that if we increase each coordinate of $(x_0,\mathbf{y})$ by at most $1$, then
\begin{itemize} 
\item the term in the denominators $\Phi_{p,r}^{1-1/r}$ only increases.  
\item the terms $\Big( 1 + \delta  x_0 \Big)^{r-1} $ and $\Big( 1 +  \delta y_i \Big)^{p-1} $   increase by a factor at most $\exp(\eps)$ since $\delta \leq \min\{\frac{1}{p-1},\frac{1}{r-1}\}$.
\item finally, the term $\Phi_{p}^{r/p -1}$ decreases if $r\leq p$, and otherwise when $r>p$ it increases at most by a factor of $\exp(\delta p  \cdot \frac{r}{p}) = 1+O(\eps)$. 
\end{itemize}
Thus we have that the gradient increases by at most a $1+O(\eps)$ factor coordinate-wise.
\end{proof} 

\begin{fact}\label{fact:PsiNewNorm} (Gradient Norm) For any integers $p, r \geq 1$, we have $\|\nabla \Psi_{p,r}\|_{q,s} \leq 1$ where $q=(1-1/p)^{-1}$ and $1/s=(1-1/r)^{-1}$.
\end{fact}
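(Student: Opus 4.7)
The plan is to prove Fact~\ref{fact:PsiNewNorm} by a direct dual-norm calculation: compute $\nabla \Psi_{p,r}$ explicitly and then show that its $\|\cdot\|_{q,s}$ mixed norm collapses to $1$ via two successive applications of the identity $(p-1)q = p$ and $(r-1)s = r$ (which is just Hölder conjugacy restated). The key observation is that $\Psi_{p,r}$ is essentially a mixed-norm version of $\|\cdot\|_p$, and the classical fact that $\nabla \|\cdot\|_p$ has unit dual norm generalizes layer by layer.

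First, writing $\Phi_p(\mathbf{y}) := \|\one_d + \delta \mathbf{y}\|_p^p$ and $\Phi_{p,r}(x_0,\mathbf{y}) := (1+\delta x_0)^r + \Phi_p^{r/p}$, so that $\Psi_{p,r} = \frac{1}{\delta}(\Phi_{p,r}^{1/r} - 1)$, I would differentiate by the chain rule to get
\[
\frac{\partial \Psi_{p,r}}{\partial x_0} = \Phi_{p,r}^{1/r - 1}\,(1+\delta x_0)^{r-1},
\qquad
\frac{\partial \Psi_{p,r}}{\partial y_i} = \Phi_{p,r}^{1/r - 1}\,\Phi_p^{r/p - 1}\,(1+\delta y_i)^{p-1}.
\]
This factorization is what makes the argument clean: both partial derivatives carry the common prefactor $\Phi_{p,r}^{1/r-1}$.

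Next, I would compute the inner $\ell_q$ norm of the $\mathbf{y}$-block. Using $(p-1)q = p$,
\[
\sum_{i=1}^d \Big|\tfrac{\partial \Psi_{p,r}}{\partial y_i}\Big|^q = \Phi_{p,r}^{(1/r-1)q}\,\Phi_p^{(r/p-1)q}\,\sum_i (1+\delta y_i)^{p} = \Phi_{p,r}^{(1/r-1)q}\,\Phi_p^{(r/p-1)q + 1},
\]
and the exponent on $\Phi_p$ simplifies to $(r-1)/p$ after taking the $q$-th root. The inner block thus contributes $\Phi_{p,r}^{1/r - 1}\,\Phi_p^{(r-1)/p}$, while the $x_0$-component contributes $\Phi_{p,r}^{1/r - 1}\,(1+\delta x_0)^{r-1}$.

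Finally, I would raise each of these two values to the power $s$, exploiting $(r-1)s = r$, so that $(1+\delta x_0)^{(r-1)s} = (1+\delta x_0)^r$ and $\Phi_p^{(r-1)s/p} = \Phi_p^{r/p}$. Their sum is exactly $\Phi_{p,r}$, and multiplying by the common factor $\Phi_{p,r}^{(1/r-1)s}$ gives $\Phi_{p,r}^{(1/r-1)s + 1} = \Phi_{p,r}^0 = 1$, since $(1/r - 1)s = -1$. Taking the $s$-th root yields $\|\nabla \Psi_{p,r}\|_{q,s} = 1$, proving the claim (in fact with equality). The only real obstacle is just bookkeeping the exponents through the two nested Hölder dualities; there is no inequality step, so the bound is tight.
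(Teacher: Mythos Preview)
Your proof is correct; in fact you show $\|\nabla \Psi_{p,r}\|_{q,s}=1$ exactly, which is stronger than the stated inequality. The paper itself does not supply a proof of this fact (it is simply asserted, in the spirit of Fact~\ref{fact:PsiNorm} for the single-level case), so there is nothing to compare against; your direct two-layer dual-norm computation---using $(p-1)q=p$ for the inner block and $(r-1)s=r$ for the outer---is exactly the natural argument and matches the gradient formulas the paper writes out in the proof of Fact~\ref{fact:PsiNewSmoothness}. Note also that the statement as written contains a typo (``$1/s=(1-1/r)^{-1}$'' should read ``$s=(1-1/r)^{-1}$''); you correctly worked with the H\"older conjugate $s=r/(r-1)$.
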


We  now use $\Psi_{p,r}$ and its properties  to define and analyze a surrogate problem.

\paragraph{Surrogate Online Learning Problem.}
We play a surrogate one-dimensional online learning game with $n$ experts. Besides consuming the $d$ resources, every action (including null) also consumes $\frac{B}{T}$ of resource $0$. 
The load $\load^{(t)} = \sum_{s = 1}^t C^{(s)} \cdot x^{(s)}$ is a $(d+1)$ dimensional vector where the first row of $C^{(s)}$ (i.e., dimension $0$) contains all $\frac{B}{T}$. In time step $t$ the reward of the $i$-th expert is 
\[
\mathcal{R}^{(t)}_i = \begin{cases} r^{(t)} e_i - \lambda 	\cdot \langle C^{(t)} e_i, g^{(t)} \rangle & \text{ if $\lVert \load^{(t-1)} \rVert_{p,\infty} \leq B$} \\
0 & \text{ otherwise,}
\end{cases}
\]
where $r^{(t)}$ is the reward vector from \BwKp problem and $g^{(t)} := \nabla \Psi_{p,r}(\load^{(t-1)})$.  
We will  set
\begin{align}\label{eq:defLambdaEps}
\textstyle \lambda :=  \frac{\sum_{t = 1}^{\tau^\ast} r^{(t)} \cdot x^\ast}{B} =  \frac{\OPTBwK}{ B} \qquad \text{ and } \qquad  \epsilon :=  \sqrt{\frac{(p+r) \cdot  \|\one_d\|_p }{B }} . 
\end{align}

\begin{claim} The reward of each expert per time step is bounded  in magnitude by $(\lambda\cdot \| \one_d \|_p +1)$.
\end{claim}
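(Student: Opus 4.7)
The plan is to follow the same template as the analogous claim in \S\ref{sec:BwKAdversarial}: apply the triangle inequality, bound the reward term trivially, and bound the Lagrangian cost term via a Hölder-type inequality tailored to the mixed norm $\|\cdot\|_{p,r}$.

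First, note that if $\lVert \load^{(t-1)}\rVert_{p,\infty} > B$ then $\mathcal{R}^{(t)}_i = 0$ by definition, so the claimed bound is immediate. Assume henceforth that we are in the other branch. By the triangle inequality,
\[
\lvert \mathcal{R}^{(t)}_i \rvert \;\leq\; \lvert r^{(t)} e_i \rvert \;+\; \lambda \,\lvert \langle C^{(t)} e_i,\, g^{(t)} \rangle\rvert.
\]
Since $r^{(t)} \in [0,1]^{1\times n}$, the first term is at most $1$. For the second term, I would invoke a generalized Hölder inequality for the $\|\cdot\|_{p,r}/\|\cdot\|_{q,s}$ dual pair (with $q=(1-1/p)^{-1}$ and $s=(1-1/r)^{-1}$), which follows by applying ordinary Hölder twice: once inside the $d$-dimensional block using $\ell_p/\ell_q$-duality, and then on the resulting two-coordinate vector using $\ell_r/\ell_s$-duality. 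This gives
\[
\lvert \langle C^{(t)} e_i,\, g^{(t)} \rangle\rvert \;\leq\; \lVert C^{(t)} e_i \rVert_{p,r} \cdot \lVert g^{(t)} \rVert_{q,s} \;\leq\; \lVert C^{(t)} e_i \rVert_{p,r},
\]
where the last step is Fact~\ref{fact:PsiNewNorm}.

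It remains to bound $\lVert C^{(t)} e_i\rVert_{p,r}$. By construction of the augmented cost matrix, coordinate $0$ of $C^{(t)} e_i$ is the dummy load $B/T$, and the remaining $d$ coordinates lie in $[0,1]^d$, so their $\ell_p$-norm is at most $\|\one_d\|_p$. Hence $\lVert C^{(t)} e_i \rVert_{p,r} = \lVert (B/T,\,\lVert(C^{(t)}e_i)_{1:d}\rVert_p)\rVert_r \leq \|\one_d\|_p$ in the operative regime (the $B/T$ contribution is absorbed; formally, combining it with the constant $+1$ from the reward term causes no loss, since $B/T \leq 1 \leq \|\one_d\|_p$). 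Putting the pieces together gives $|\mathcal{R}^{(t)}_i| \leq 1 + \lambda \|\one_d\|_p$, as claimed.

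The one step that requires any care is the generalized Hölder inequality for mixed norms; everything else is direct application of Fact~\ref{fact:PsiNewNorm} and elementary bounds on the entries of $C^{(t)}e_i$. I do not anticipate any other obstacle.
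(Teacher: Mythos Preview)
Your proposal is correct and follows essentially the same approach as the paper: triangle inequality on $\mathcal{R}^{(t)}_i$, then a mixed-norm H\"older (what the paper calls ``generalized Cauchy--Schwarz'') together with Fact~\ref{fact:PsiNewNorm} to get $\lvert\langle C^{(t)} e_i, g^{(t)}\rangle\rvert \le \|C^{(t)} e_i\|_{p,r} \le \|\one_d\|_p$. You actually spell out more than the paper does (the trivial branch, and the two-layer H\"older argument); the paper simply asserts $\|C^{(t)} e_i\|_{p,r} \le \|\one_d\|_p$ without discussing the $B/T$ coordinate.
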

\begin{proof} Since $|r^{(t)} e_i|\leq 1$, we argue $ | \langle C^{(t)} e_i, g^{(t)} \rangle | \leq \| \one \|_p $. By generalized Cauchy-Schwarz,
\[ \langle C^{(t)} e_i, g^{(t)} \rangle \leq \|C^{(t)} e_i  \|_{p,r} \cdot \|g^{(t)}\|_{q,s} .
\]
This completes the proof because $\|g^{(t)}\|_{q,s}  \leq 1$ by Fact~\ref{fact:PsiNewNorm} and  $\|C^{(t)} e_i  \|_{p,r} \leq  \| \one_d \|_p$.
\end{proof}

Let $x^\ast$  be a benchmark solution with the property that $\| \E[ C^{(t)} x^\ast ] \|_{p,r} \leq \frac{B}{T}$. The following proposition proves Theorem~\ref{prop:BwKStochastic} for $r= \Theta \Big( \big( \frac{B}{ \| \one_d \|_p } \big)^{1/3} \Big)$.


\begin{proposition} \label{prop:BwKStochasticProp}
For stochastic \BwKp and any $r\geq 1$, the algorithm gets a reward
\[ 
\E \left[\sum_{t = 1}^\tau r^{(t)} \cdot x^{(t)} \right] \geq \OPTBwK  \left(  1  - \left({\frac{(p+r) \cdot  \|\one_d\|_p }{B }}\right)^{1/2} - \big(2^{1/r} - 1 \big)  ~ - \frac{ \| \one_d \|_p }{B} \cdot \Regret \right).
\]
\end{proposition}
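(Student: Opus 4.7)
The plan is to follow the Lagrangian strategy from the adversarial analysis (Theorem~\ref{prop:BwKAdversarial}) but exploit i.i.d.\ independence to avoid the factor-$2$ loss in $\lambda$. The dummy resource in dimension $0$ ensures the algorithm plays for exactly $T$ rounds unless a true budget violation occurs earlier, which is exactly what lets us always be in ``Case~1'' of the adversarial proof and choose $\lambda = \OPTBwK/B$ rather than $\OPTBwK/(2B)$.

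First, I decompose
\[
\sum_{t=1}^{\tau} r^{(t)} x^{(t)} \;=\; \sum_{t=1}^{T} \mathcal{R}^{(t)} x^{(t)} \;+\; \lambda \sum_{t=1}^{\tau} \langle C^{(t)} x^{(t)} , g^{(t)} \rangle,
\]
using that $\mathcal{R}^{(t)} \equiv 0$ for $t > \tau$, and apply the one-dimensional no-regret bound against the adaptive surrogate adversary to get $\sum_{t=1}^{T} \mathcal{R}^{(t)} x^{(t)} \geq \sum_{t=1}^{T} \mathcal{R}^{(t)} x^\ast - O(\lambda \|\one_d\|_p) \cdot \Regret$.

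Next, I lower bound the algorithm's cost term. At the stopping time, either $\tau < T$ with $\|(\load_1^{(\tau)},\ldots,\load_d^{(\tau)})\|_p > B$ or $\tau = T$ with dummy load exactly $B$; in either case $\|\load^{(\tau)}\|_{p,r} \geq B$. Telescoping the gradient-stability estimate (Fact~\ref{fact:PsiNewSmoothness}) gives $\sum_{t=1}^{\tau} \langle C^{(t)} x^{(t)}, g^{(t)} \rangle \geq (\Psi_{p,r}(\load^{(\tau)}) - \Psi_{p,r}(0))/(1+O(\epsilon))$, and combining with Fact~\ref{fact:PsiNewToLoad} and the choice $\epsilon = \sqrt{(p+r)\|\one_d\|_p/B}$ (so that $\frac{p+r}{\epsilon}\|\one_d\|_p = \epsilon B$), this simplifies to $B(1 - O(\epsilon))$. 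Multiplying by $\lambda$ yields $\lambda \sum_{t=1}^{\tau} \langle C^{(t)} x^{(t)}, g^{(t)} \rangle \geq \OPTBwK\,(1 - O(\epsilon))$.

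The key step where stochasticity enters is the benchmark cost bound. By the tower property and independence of $C^{(t)}$ from the past (which determines both $g^{(t)}$ and $\mathbf{1}\{t \leq \tau\}$), we get $\E[\mathbf{1}\{t \leq \tau\}\,\langle C^{(t)} x^\ast, g^{(t)} \rangle] = \E[\mathbf{1}\{t \leq \tau\}\,\langle \E[C^{(t)}] x^\ast, g^{(t)} \rangle]$; generalized H\"older and Fact~\ref{fact:PsiNewNorm} upper bound this by $\Pr[t \leq \tau] \cdot \|\E[C^{(t)}] x^\ast\|_{p,r}$. The normalization $\|\E[C^{(t)}]x^\ast\|_{p,r} \leq 2^{1/r}\cdot B/T$ (combining the $B/T$ dummy consumption with the $\ell_p$ budget of $x^\ast$) then gives $\lambda\,\E\!\left[\sum_{t=1}^{\tau}\langle C^{(t)} x^\ast, g^{(t)} \rangle\right] \leq 2^{1/r}\OPTBwK \cdot \E[\tau]/T$. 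An identical application of independence to $r^{(t)}$ yields $\E[\sum_{t=1}^{\tau} r^{(t)} x^\ast] = \OPTBwK \cdot \E[\tau]/T$.

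Assembling the pieces,
\[
\E[\ALGBwK] \;\geq\; \OPTBwK \cdot \tfrac{\E[\tau]}{T}\,(1 - 2^{1/r}) \;+\; \OPTBwK\,(1 - O(\epsilon)) \;-\; O\!\left(\tfrac{\OPTBwK \|\one_d\|_p}{B}\right)\Regret.
\]
Since $\E[\tau]/T \leq 1$ and $1 - 2^{1/r} \leq 0$, the first term is bounded below by $-\OPTBwK\,(2^{1/r} - 1)$, giving the stated bound after substituting $\epsilon$. The main obstacle is the bookkeeping around the stopping time $\tau$: we need independence to factor through the indicator $\mathbf{1}\{t \leq \tau\}$ consistently in both directions, and we must ensure that the possibly-missing rewards from $x^\ast$ after time $\tau$ are exactly absorbed by the $\lambda \sum \langle C^{(t)}x^{(t)}, g^{(t)}\rangle$ gain, which is why the $\E[\tau]/T$ factor cancels so cleanly.
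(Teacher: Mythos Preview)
Your proposal is correct and follows essentially the same route as the paper's proof: the paper likewise decomposes via the Lagrangian surrogate, applies the one-dimensional regret bound against $x^\ast$, lower-bounds $\sum_{t\le\tau}\langle C^{(t)}x^{(t)},g^{(t)}\rangle$ using gradient stability together with $\Psi_{p,r}(\load^{(\tau)})\geq\|\load^{(\tau)}\|_{p,\infty}\geq B$, and handles the benchmark by introducing the indicator $Z_t=\mathbf{1}\{\|\load^{(t-1)}\|_{p,\infty}\le B\}$ (your $\mathbf{1}\{t\le\tau\}$), using independence plus H\"older with Fact~\ref{fact:PsiNewNorm} to get the $2^{1/r}B/T$ bound, and finally absorbing the $\tfrac{\E[\tau]}{T}(1-2^{1/r})$ term by $-(2^{1/r}-1)$ exactly as you do.
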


\begin{proof} The dummy resource is exhausted after time $T$, so the algorithm runs out of some resource before or at time $T$. 
Let $\tau$ be the random time at which this happens.  For analysis, we assume our algorithm stops collecting reward at time $\tau$. 

We use that $\Regret$ bounds $\sum_{t=1}^T \mathcal{R}^{(t)} \cdot x^{(t)}$ compared to the option of always playing $x^\ast$. By definition $\mathcal{R}^{(t)} = 0$ for $t > \tau$. So, this gives us
\begin{align} \label{eq:stochBwKRegretBound}
\textstyle \sum_{t=1}^\tau \mathcal{R}^{(t)} \cdot x^{(t)} \geq \sum_{t=1}^\tau \mathcal{R}^{(t)} \cdot x^\ast - O\big(\lambda\cdot \| \one_d \|_p \big)\cdot \Regret.
\end{align}


Combining $\Psi^{(\tau)}_{p,r} \geq \| \load^{(\tau)} \|_{p, \infty} \geq B$ with Claim~\ref{claim:PsiSmoothness}, which holds because of gradient stability (Fact~\ref{fact:PsiNewSmoothness}), and using $(1+\eps)^{-1}\geq (1-\eps)$ gives
\[ \textstyle \sum_{t\leq \tau} \langle C^{(t)} x^{(t)}, g^{(t)} \rangle \quad \geq \quad \left( 1 - \epsilon \right) \cdot \left( B - \Psi^{(0)}_{p,r} \right) .
\]
This along with Eq.~\eqref{eq:stochBwKRegretBound} implies
\begin{align} 
\sum_{t=1}^\tau r^{(t)} \cdot x^{(t)} & \geq \lambda \sum_{t=1}^\tau \langle C^{(t)} x^{(t)}, g^{(t)} \rangle + \sum_{t=1}^\tau \mathcal{R}^{(t)} \cdot x^\ast - O\big(\lambda\cdot \| \one_d \|_p \big)\cdot \Regret \notag \\
& \geq \lambda \left( 1 - \epsilon \right) \cdot \left(  B  - \Psi^{(0)}_{p,r} \right) + \sum_{t=1}^\tau \mathcal{R}^{(t)} \cdot x^\ast - O\big(\lambda\cdot \| \one_d \|_p \big)\cdot \Regret .
\label{eq:stochBwKrtxt}
\end{align}

Now, let us consider $x^\ast$. For $t \in [T]$, define a random variable $Z_t$ with $Z_t = 1$ if $\lVert \load^{(t-1)} \rVert_{p,\infty} \leq B$ and $0$ otherwise. This lets us rewrite $\mathcal{R}^{(t)} \cdot x^\ast$ as
\[
\mathcal{R}^{(t)} \cdot x^\ast = Z_t \left( r^{(t)} x^\ast - \lambda \cdot \big\langle C^{(t)} x^\ast, g^{(t)} \big\rangle \right).
\]
Now, we take the expectation over $r^{(t)}$ and $C^{(t)}$, which are by definition independent of $Z_t$ because $Z_t$ only depends on steps $1, \ldots, t-1$. This gives us
\[
\E_{r^{(t)}, C^{(t)}} [\mathcal{R}^{(t)} \cdot x^\ast] = Z_t \left( \E_{r^{(t)}, C^{(t)}}[ r^{(t)} x^\ast] - \lambda \cdot \big\langle \E_{r^{(t)}, C^{(t)}}[C^{(t)} x^\ast]~,~ g^{(t)} \big\rangle \right).
\]
We have $\E_{r^{(t)}, C^{(t)}}[ r^{(t)} x^\ast] = \frac{\OPTBwK}{T}$ and using H\"older's inequality 
\[\Big\langle \E_{r^{(t)}, C^{(t)}}[C^{(t)} x^\ast], g^{(t)} \Big\rangle ~~\leq~~ \Big\| \E_{r^{(t)}, C^{(t)}}[C^{(t)} x^\ast] \Big\|_{p,r} \cdot \|g^{(t)}\|_{q,s} ~~\leq~~ 2^{1/r} \frac{B}{T}.
\]

So, in combination after taking the expectation over rounds $1, \ldots, t$, we have $\E[\mathcal{R}^{(t)} \cdot x^\ast] \geq \E[Z_t] \frac{1}{T} \OPTBwK - \lambda \frac{1}{T} B$ and by linearity of expectation
 \[
 \E \Big[\sum_{t=1}^T \mathcal{R}^{(t)} \cdot x^\ast \Big] \geq \frac{\E[\tau]}{T} \cdot \OPTBwK - \lambda \cdot 2^{1/r} \cdot \frac{\E[\tau]}{T} B.
\]
Combining this with Eq.~\eqref{eq:stochBwKrtxt} and by linearity of expectation,
\begin{align*}
&\E \Big[\sum_{t=1}^\tau r^{(t)} \cdot x^{(t)} \Big]  \geq \lambda \left( 1 - \epsilon \right) \cdot \left(  B  - \Psi^{(0)}_{p,r} \right) +\E \Big[ \sum_{t=1}^\tau \mathcal{R}^{(t)} \cdot x^\ast \Big] - O\big(\lambda\cdot \| \one_d \|_p \big)\cdot \Regret\\
& \geq \lambda \left( 1 - \epsilon \right) \cdot \left(  B  - \Psi^{(0)}_{p,r} \right) + \frac{\E[\tau]}{T} \OPTBwK - \lambda 2^{1/r} \frac{\E[\tau]}{T} B - O\big(\lambda\cdot \| \one_d \|_p \big)\cdot \Regret \\
& \geq  \OPTBwK \cdot \left( 1 - \epsilon \right) \cdot \left(  1  - \frac{(p+r) \cdot  \|\one_d\|_p  }{\epsilon B}  \right) - (2^{1/r}-1) \OPTBwK - O\big(\lambda\cdot \| \one_d \|_p \big)\cdot \Regret,
\end{align*}
where the last inequality uses $\lambda = \frac{\OPTBwK}{B}$ and $\Psi^{(0)}_{p,r} \leq \frac{p+r}{\epsilon} \cdot  \|\one_d\|_p$. Finally, use  $\eps$ from Eq.~\eqref{eq:defLambdaEps}. 
\end{proof}

\IGNORE{This implies
\[
\sum_t r^{(t)} \cdot x^{(t)} \geq \lambda \sum_t \ell^{(t)} x^{(t)} - \Regret''
\]
Recall that $ \sum_t \ell^{(t)} x^{(t)} = \Psi(x^{(1)}, \ldots, x^{(T)}) - \frac{\ln d}{\epsilon} \geq B - \frac{\ln d}{\epsilon}$.

Case 2: Our algorithm stays within budget till $\tau$. \snote{Argue that w.h.p. this will not happen.}

We use that
\[
\sum_t \mathcal{R}^{(t)} \cdot x^{(t)} \geq \sum_t \mathcal{R}^{(t)} \cdot x^\ast - \Regret''
\]
and therefore
\[
\sum_t r^{(t)} \cdot x^{(t)} \geq \sum_t r^{(t)} \cdot x^\ast - \lambda \sum_t \ell^{(t)} \cdot x^\ast + \lambda \sum_t \ell^{(t)} \cdot x^{(t)} - \Regret''.
\]
We can now apply the stochastic load balancing guarantee to $x^\ast$ to get $\max_j \sum_t C^{(t)} x^\ast \leq \frac{B}{1+\epsilon}$. Hence,
\[
\sum_{t=1}^T \ell^{(t)} \cdot x^\ast \leq (1+O(\eps)) \cdot \Psi(x^{(1)}, \ldots, x^{(T)}).
\]
So
\[
\sum_t r^{(t)} \cdot x^{(t)} \geq \sum_t r^{(t)} \cdot x^\ast - \lambda \left( (1+O(\eps)) \cdot \Psi(x^{(1)}, \ldots, x^{(T)})  \right)   - \Regret''.
\]
If $\exp(\epsilon (1 + \epsilon) \frac{B}{\alpha}) - 2 \leq 0$, then  using $\lambda = \frac{\sum_t r^{(t)} \cdot x^\ast}{B}$ we get
\[
\sum_t r^{(t)} \cdot x^{(t)} \geq \frac{1}{2} \sum_t r^{(t)} \cdot x^\ast - \Regret''
\]
}



\medskip
\noindent
{\bf Acknowledgments}.
We are thankful to the authors of \cite{ISSS-FOCS19} for explaining us their results. We thank the anonymous reviewers of COLT 2020  for helpful comments on improving the presentation of the paper.

{\small
\bibliographystyle{alpha}
\bibliography{bib}
}

\appendix



\section{Lower Bounds for Adversarial Arrivals} \label{sec:lowerBounds}
In this section we prove our adversarial arrivals lower bounds, both for \OLVCp and \BwKp. These lower bounds hold even if we assume the algorithm knows the value of $\OPTOLVC$ or $\OPTBwK$.


\subsection{Load Balancing}

\begin{proposition}
Any online algorithm for adversarial \OLVCp is $\Omega(\min\{p, \log d\})$ competitive.
\end{proposition}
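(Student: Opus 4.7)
The strategy is to exhibit an adaptive adversarial sequence $\{C^{(t)}\}_{t=1}^T$ of cost matrices for which any online algorithm incurs $\ell_p$-load that is $\Omega(\min\{p,\log d\})$ times that of the best fixed distribution in hindsight. We treat the two regimes of $\min\{p,\log d\}$ separately and then combine them.

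\emph{Case $p \gtrsim \log d$: $\Omega(\log d)$ lower bound.} Take $d=2^k$, $n=d$ (actions identified with dimensions), and proceed in $k=\log d$ phases of $R$ rounds each, so $T=kR$. The adversary maintains nested active sets $B_1=[d]\supsetneq B_2\supsetneq\cdots\supsetneq B_k$ with $|B_j|=d/2^{j-1}$. In phase $j$ the cost matrix is set so that action $i\in B_j$ has cost vector $e_i$ while action $i\notin B_j$ has cost $(1/|B_j|)\,\mathbf{1}_{B_j}$. A direct check shows that for any $x^{(t)}\in\Delta_n$, exactly one unit of total load is added to $B_j$ per round, i.e.\ $\sum_{\ell\in B_j}(C^{(t)}x^{(t)})_\ell = 1$. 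At the end of phase $j$ the adversary defines $B_{j+1}\subseteq B_j$ to be the half of $B_j$ on which the algorithm has accumulated more cumulative load. Unrolling the recurrence $u_{j+1}\ge u_j/2 + R$ for the load on $B_{j+1}$ shows that the load on the surviving leaf $\ell^*\in B_k$ is $\Omega(R)=\Omega(T/\log d)$, so the algorithm's $\ell_\infty$-load is $\Omega(T/\log d)$.

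For the benchmark, one then exhibits a fixed distribution $x^*$ that exploits hindsight knowledge of the tree: $x^*$ places weight $a_j\propto 1/|B_j|$ on each dimension in the shell $B_j\setminus B_{j+1}$, normalized so that $\sum_j a_j|B_j\setminus B_{j+1}|=1$. A careful accounting of contributions from active and inactive actions across all phases shows that $\|\sum_t C^{(t)}x^*\|_\infty = O(R/\log d)$, establishing the $\Omega(\log d)$ ratio.

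\emph{Case $p<\log d$: $\Omega(p)$ lower bound.} Apply the above construction with only $d':=2^p\le d$ effective dimensions, zeroing out the other $d-d'$ rows of every $C^{(t)}$. Since $(d')^{1/p}=2$, the $\ell_p$ and $\ell_\infty$ norms on the $d'$ active coordinates differ by at most a constant factor, so the preceding analysis yields ratio $\Omega(\log d')=\Omega(p)$ in $\ell_p$.

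Combining the two cases gives the claimed $\Omega(\min\{p,\log d\})$ competitive-ratio lower bound.

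\emph{Main obstacle.} The principal technical difficulty is designing the cost matrices together with the benchmark distribution so that two opposing requirements hold simultaneously: (i) the halving argument forces any online algorithm to concentrate $\Omega(R)$ load on the surviving dimension $\ell^*$, while (ii) a \emph{single fixed} distribution $x^*$ nevertheless achieves $\ell_\infty$-load only $O(R/\log d)$. The ``one-unit-per-round-to-$B_j$'' structure of the cost matrices is what makes (i) work, but this same structure causes two coupled load contributions under $x^*$ (a ``self'' term $\propto x^*_\ell$ from active actions and a ``spread'' term $\propto 1/|B_j|$ from inactive actions). Showing that the shell-weighting $a_j\propto 1/|B_j|$ balances these two contributions across all $\log d$ phases---so that the resulting load is near-uniform at level $O(R/\log d)$---is the delicate step of the proof.
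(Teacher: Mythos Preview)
Your construction has a genuine flaw in the benchmark analysis. In phase $k$ you have $|B_k|=d/2^{k-1}=2$, and every action---whether $i\in B_k$ (cost $e_i$) or $i\notin B_k$ (cost $\tfrac{1}{2}\mathbf{1}_{B_k}$)---places exactly one unit of total mass on $B_k$ per round. Hence for \emph{any} fixed $x^\ast\in\Delta_n$ the $R$ rounds of phase $k$ alone deposit total load $R$ on $B_k$, forcing $\lVert\sum_t C^{(t)}x^\ast\rVert_\infty\ge R/2$. Your claimed $O(R/\log d)$ benchmark bound therefore cannot hold; the shell-weighting $a_j\propto 1/|B_j|$ does not save you, because the ``spread'' contribution from inactive actions in the late phases is $\Theta(R)$ on its own. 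Meanwhile the halving recurrence $u_{j+1}\ge u_j/2+R$ only yields $u_k\le 2R$, so the algorithm (spreading uniformly over $B_j$) also has $\ell_\infty$ load $\Theta(R)$. The ratio your instance certifies is $O(1)$, not $\Omega(\log d)$. What you flagged as the ``delicate step'' is in fact impossible with this cost structure.

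The paper's construction avoids this trap by decoupling actions from dimensions. It uses $n=2^k$ actions indexed by $k$-bit strings, with $k=\min\{p,\log_2 d\}$ phases handled in one shot. In phase $j$, action $a$ puts unit load on the \emph{top} or \emph{bottom} half of the currently active set according to bit $a_j$; the halving is done by independent coin flips $R_1,\ldots,R_k$ (a randomized oblivious adversary rather than an adaptive one). The hindsight action $a=(R_1,\ldots,R_k)$ then always loads precisely the half that is about to become inactive, so every dimension receives load in at most one phase and the benchmark $\ell_p$ load is $L\,\|\one\|_p$. The algorithm, not knowing $R_j$ until phase $j+1$, can at best split evenly, so the $d/2^k$ dimensions surviving all phases each accumulate load $Lk/2$, giving $\ell_p$ load $\Theta(Lk\,\|\one\|_p)$. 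The key structural difference is that in the paper's instance there \emph{exists} an action that routes load entirely away from the surviving set in every phase; in your instance no action can do this in phase $k$.
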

\begin{proof} Our lower bound instance is inspired from the \OGLBp lower bound instance of~\cite{AwerbuchAGKKV-FOCS95}. For simplicity, assume $d$ is a power of $2$. 
There are $k = \min\{p, \log_2 d\}$ phases of requests, where the length  of each phase is $L=\frac{T}{k}$.  In the first phase, we think of all the $d$ dimensions as being \emph{active}. At the end of Phase $i \in [k]$, an unbiased random coin $R_i \in \{0,1\}$ makes either all the top half (for $R_i=0$) or the bottom half (for $R_i=1$) of the currently active dimensions inactive. This means  in Phase $i$ there are exactly $\frac{d}{2^{i-1}}$ active dimensions.
There are $2^k$ actions where we think of each action $a$ as a $k$ bit string. 
In the $i$-th phase, pulling action $a$ puts a unit load on all the top or bottom half of active dimensions, depending on whether the $i$-th bit of $a$ is $0$ or $1$, respectively.

We first observe that for the above instance, the action $a = (R_1,R_2,\ldots, R_p)$  puts exactly $L$ load in every dimension, which means the $\ell_p$ norm  of its total load is $L \cdot \|\one\|_p$. This is true this action puts unit load on exactly those dimensions that will become inactive in the next phase. Thus each arm gets non-zero load in at most one phase.

Next we argue that the $\ell_p$ norm of the load vector of any online learning algorithm is $\Omega(L \cdot  \|\one\|_p \cdot k)$. This is because an online algorithm does not know $R_i$ until the beginning of Phase~$i+1$. Thus at best it can evenly distribute between the actions with the $i$-th bit being $0/1$.  Hence its smallest possible $\ell_p$ norm is given when $\frac{d}{2}$ dimensions have $\frac{L}{2}$ load, $\frac{d}{2^2}$ dimensions have $\frac{2L}{2}$ load, \ldots, $\frac{d}{2^i}$ dimensions have $\frac{i L}{2}$ load. In particular, each of the $d/2^k$ dimensions that is active throughout the sequence has a load of $\frac{L k}{2}$. This lower-bounds the $\ell_p$ norm by
$\left( \frac{d}{2^k} \big(\frac{L k}{2} \big)^p \right)^{1/p} = \frac{1}{2^{1 + k/p}} d^{1/p} L k = \Theta\big( L \cdot \|\one\|_p \cdot k \big)$.
\end{proof}


\subsection{Bandits with Knapsacks}

\begin{proposition}
Any online algorithm for adversarial \BwKp is $\Omega(\min\{p, \log d\})$ competitive.
\end{proposition}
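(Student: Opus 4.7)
The plan is to recycle the randomized instance from the adversarial \OLVCp lower bound above and equip it with rewards. I take $k := \min\{p, \lfloor \log_2 d \rfloor\}$ phases of length $L := T/k$, the same $2^k$ non-null actions indexed by $k$-bit strings with the same phase-dependent cost structure, an extra null action of zero cost and reward, a uniform reward of $1$ per non-null play, and budget $B := L \cdot \|\one_d\|_p$. The benchmark $x^\ast := e_{a^\ast}$ with $a^\ast := (R_1, \ldots, R_k)$ only ever loads dimensions that become inactive in the current phase, so its total cost has $\ell_p$ norm at most $L \cdot (d(1-2^{-k}))^{1/p} \leq L \|\one_d\|_p = B$. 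Hence $x^\ast$ remains feasible for all $T = Lk$ rounds, giving $\OPTBwK \geq Lk$ for every realization of $R$.

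For the algorithmic side I invoke Yao's principle: it suffices to upper bound $\E_R[\ALGBwK]$ against any deterministic online algorithm on the randomized instance. Write $\tau$ for the number of non-null plays (which equals $\ALGBwK$), $\tau_i$ for those in phase $i$, and $m_i^0, m_i^1$ for the total mass placed on bit-$i$-zero and bit-$i$-one actions in phase $i$, so $m_i^0 + m_i^1 = \tau_i$. Since the algorithm only learns $R_i$ after phase $i$ ends, both $m_i^0$ and $m_i^1$ are measurable with respect to $\sigma(R_1, \ldots, R_{i-1})$ and thus independent of $R_i$.

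The key structural observation is that every dimension $j$ in the throughout-active set $S_k$, of size $|S_k| = d/2^k$, shares the same surviving path (namely $b_i(j) = 1 - R_i$ for all $i$), so each such $j$ accumulates exactly the load $\sum_{i=1}^k m_i^{1-R_i}$ by the end of the run. Restricting the $\ell_p$ norm to $S_k$ gives $\|\load^{(\tau)}\|_p \geq |S_k|^{1/p} \cdot \sum_i m_i^{1-R_i}$. At the stopping time $\|\load^{(\tau)}\|_p \leq B + \|\one_d\|_p$ (one step of overshoot), and $|S_k|^{1/p} = d^{1/p}/2^{k/p} \geq \|\one_d\|_p/2$ because $k \leq p$. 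Combining these yields $\sum_i m_i^{1-R_i} \leq 2(L+1)$ deterministically.

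Taking expectations over $R$ and invoking the $R_i$-independence of $m_i^b$ from the previous paragraph, I get $\E[m_i^{1-R_i}] = \E[\tau_i]/2$ for each $i$, so $\E[\tau]/2 \leq 2(L+1)$ and hence $\E[\ALGBwK] = \E[\tau] = O(L)$. Comparing with the deterministic lower bound $\OPTBwK = Lk$ gives a competitive ratio of $\Omega(k) = \Omega(\min\{p, \log d\})$, as desired. The main subtlety to get right is precisely this $R_i$-independence: the algorithm may adapt within phase $i$ based on prior history (and in-phase feedback, which only reveals the fixed cost structure), but since $R_i$ itself is revealed only at the end of the phase, the split $(m_i^0, m_i^1)$ is decided before $R_i$ is drawn, which is what validates the averaging identity $\E[m_i^{1-R_i}] = \tau_i/2$ that powers the entire argument.
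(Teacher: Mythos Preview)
Your proposal is correct and follows essentially the same approach as the paper: you reuse the randomized \OLVCp instance with budget $B = L\|\one_d\|_p$ and unit rewards, show that the hindsight action $a^\ast=(R_1,\ldots,R_k)$ stays within budget and earns $Lk$, and bound any algorithm's reward by $O(L)$ via the load accumulated on the always-active block $S_k$. Your write-up is in fact more careful than the paper's---you make the Yao-principle reduction and the measurability of $(m_i^0,m_i^1)$ with respect to $\sigma(R_1,\ldots,R_{i-1})$ explicit, whereas the paper simply asserts that ``the best an online algorithm can do is to distribute the load uniformly''---but the underlying argument is identical.
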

\begin{proof}
The idea is to repeat the same instance as the lower bound instance for adversarial \OGLBp with budget $L\cdot \|\one\|_p$. Moreover, each non-null action gives a unit-reward. We know from the analysis that there exists an optimal action that satisfies budget for $k$ phases, resulting in a reward of $k L$. Next we argue that every online algorithm playing non-null actions satisfies  the budget for at most $O(1)$ phases.

By repeating the analysis for \OLVCp, we know the best an online algorithm can do is to distribute the load uniformly among the active dimensions. This means if the final reward is $z$ then all $d/2^k$ dimensions which are active in all phases must have aggregated a load of $\frac{z}{2}$ each. This gives a lower-bound for the $\ell_p$ norm of $\big(\frac{d}{2^k} \big(\frac{z}{2} \big)^p \big)^{1/p}$. For feasibility this has to be bounded by budget, i.e., $\big(\frac{d}{2^k} \big(\frac{z}{2} \big)^p \big)^{1/p} \leq L\cdot \|\one\|_p = L d^{1/p}$, which implies that $z \leq 2^{1 + k/p} L = \Theta(L)$.
\end{proof}



\section{Removing the Assumption that $\OPT$ is Known}

For  adversarial $\OLVCp$, it is possible to possible to prove our results from \S\ref{sec:loadBalancing} without the assumption that the value of $\OPTOLVC$ is known by only losing a small constant factor. For adversarial \BwKp, however, one has to lose an additional $\Omega(\log T)$ factor when $\OPTBwK$ is not known. One cannot avoid this $\Omega(\log T)$ factor as was shown in~\cite{ISSS-FOCS19}. 

\paragraph{Load Balancing}
 For adversarial \OLVCp, we use the standard doubling trick to handle that the algorithm doesn't know the optimal load $\OPTOLVC$. The algorithm operates in phases, where in Phase $i \geq 1$ it guesses the $\ell_p$ norm of the optimal solution to be $2^i$. Under this assumption, we run the adversarial \OLVCp from \S\ref{sec:loadBalancing} where in the beginning of each phase the algorithm imagines that it starts with $0$ load in each of the dimensions. 
If ever during the execution of the algorithm, the load of the algorithm becomes more than $c\cdot p \cdot 2^i$, where $c$ is some sufficiently large constant, the algorithm knows that  $\OPTOLVC$ must have been higher than $2^i$ as otherwise it violates the $O(p)$-competitive guarantee from \S\ref{sec:loadBalancing}. Thus the algorithm moves to Phase~$i+1$, where its guess for  $\OPTOLVC$ is $2^{i+1}$. 

Now we argue that the above algorithm is still $O(p)$ competitive. This happens because by Minkowski's inequality, we can upper bound the $\ell_p$ norm of the algorithm's total load as the sum of its $\ell_p$ load in each  phase. Moreover, notice the algorithm never goes past Phase $i^\ast$ where $\OPTOLVC \in [2^{i^\ast -1}, 2^{i^\ast})$. Hence the total load of the algorithm is at most
$\sum_{i=1}^{i^\ast} c\cdot p \cdot 2^i = O(p \cdot 2^{i^\ast}) = O(p \cdot \OPTOLVC)$.


\paragraph{Bandits with Knapsacks}
  For adversarial \BwKp, one can get an $O(\min\{p, \log d\} \log T)$  competitive ratio by first guessing the value of $\OPTBwK$ up to a factor of $2$. Note that $\OPTBwK \in [1,T]$, so by exponential bucking $[1,2), [2,2^2), \ldots, [2^i,, 2^{i+1})$, the algorithm can achieve this with probability at least $\frac{1}{\log T}$. Thus the expected reward of the algorithm is $O(\min\{p, \log d\} \log T)$-competitive  against the fixed optimal action distribution $x^\ast$.


\end{document}